\documentclass{article}





\PassOptionsToPackage{square,numbers}{natbib}
\usepackage[nonatbib, final]{neurips_2020}

\usepackage[utf8]{inputenc} 
\usepackage[T1]{fontenc}    
\usepackage{url}            
\usepackage{booktabs}       
\usepackage{nicefrac}       
\usepackage{microtype}      

\usepackage{graphicx}
\usepackage{subfigure}
\usepackage{booktabs} 
\usepackage{tikz}
\usepackage{pgfplots}
\usepgfplotslibrary{fillbetween}
\usepackage{float}
\usepackage{amsmath}
\usepackage{amssymb}
\usepackage{amsthm}
\usepackage{bbm}
\usepackage{amsthm}
\usepackage{enumitem}
\usepackage{xcolor}
\usepackage{dsfont}
\usepackage{geometry}
\usepackage{mathtools}
\usepackage{multirow}
\usepackage{multicol}
\usepackage{blkarray}

\geometry{verbose,tmargin=1in,bmargin=1in,lmargin=1in,rmargin=1in}
\setlength{\parindent}{0pt}
\setlength{\parskip}{5pt}

\let\temp\rmdefault
\usepackage[bb=pazo,frak=esstix]{mathalfa}
\let\rmdefault\temp

\def\+#1{\mathcal{#1}}
\newcommand{\E}{{\mathbb E}}

\newcommand{\T}{{\mathsf T}}
\renewcommand{\P}{{\mathbb P}}
\newcommand{\1}{{\mathbb 1}}

\newcommand{\argmin}{{\operatorname{argmin}}}

\newcommand{\eqdef}{\overset{\mathrm{def}}{=\joinrel=}}
\DeclarePairedDelimiterX{\infdivx}[2]{(}{)}{%
  #1\;\delimsize\|\;#2%
}
\newcommand{\KL}{\operatorname{KL}\infdivx}

\let\emptyset\varnothing


\theoremstyle{definition}
\newtheorem{definition}{Definition}[section]
\newtheorem{example}[definition]{Example}
\newtheorem{remark}[definition]{Remark}

\newtheorem{theorem}[definition]{Theorem}
\newtheorem{proposition}[definition]{Proposition}

\newtheorem{lemma}[definition]{Lemma}

\usepackage{natbib}
\usepackage{hyperref}
\hypersetup{
    colorlinks,
    linkcolor=[RGB]{165,88,88},
    citecolor=[RGB]{165,88,88},
    urlcolor=[RGB]{51,102,187}
}
\usepackage{cleveref}
\usepackage{soul}


\definecolor{sencha}{RGB}{225,25,25}

\begin{document}


\title{Learnability with Indirect Supervision Signals}

\date{}

\author{%
  Kaifu Wang \\
  University of Pennsylvania \\
  \texttt{kaifu@sas.upenn.edu} \\
  \And
  Qiang Ning\thanks{Work done while at the Allen Institute for AI and at the University of Illinois at Urbana-Champaign.} \\
  Amazon \\
  \texttt{qning@amazon.com} \\
  \And
  Dan Roth \\
  University of Pennsylvania \\
  \texttt{danroth@seas.upenn.edu} \\
}

\maketitle
\begin{abstract}
Learning from indirect supervision signals is important in real-world AI applications when, often, gold labels are missing or too costly. In this paper, we develop a unified theoretical framework for multi-class classification when the supervision is provided by a variable that contains nonzero mutual information with the gold label. The nature of this problem is determined by (i) the transition probability from the gold labels to the indirect supervision variables and (ii) the learner's prior knowledge about the transition. Our framework relaxes assumptions made in the literature, and supports learning with unknown, non-invertible and instance-dependent transitions. Our theory introduces a novel concept called \emph{separation}, which characterizes the learnability and generalization bounds. We also demonstrate the application of our framework via concrete novel results in a variety of learning scenarios such as learning with superset annotations and joint supervision signals. 
\end{abstract}

\section{Introduction}

We are interested in the problem of multiclass classification where direct and gold annotations for the unlabeled instance are expensive or inaccessible, and instead the observation of a 
dependent variable of the true label is used as supervision signal.
Examples include learning from
noisy annotations \citep{AngluinLa88, Kearns98, NDRT13}, 
partial annotations \citep{CourSaTa11, LiuDi14, Cid-SueiroGaSa14} 
or 
feedback from an external world 
\citep{CGCR10, BCFL13}.

To extract the information contained in a dependent variable, the learner should have certain \emph{prior knowledge} about the \emph{relation} between the true label and the supervision signal, which can be expressed in various forms.
For example, in the noisy label problem, the noisy rate is assumed to be bounded by a constant 
(such as the \emph{Massart noise} \citep{MassartNe06, DiakonikolasGoTz19}). %
%
In the superset problem, the true label is commonly assumed to be contained in (or consistent with) the superset annotation \citep{CourSaTa11, LiuDi14}.

As in \citep{Cid-Sueiro12, RFDL16, RooyenWi18}, we model the aforementioned relation using a \emph{transition probability}, which is the distribution of the observable variable conditioned on the label and instance. 
The transition enables the learner to induce a prediction of the observable via the prediction of the label, and construct loss functions based on the induced prediction and the observable.

In this paper, instead of assuming that the learner \emph{fully knows} the transition, we formalize the concept of \emph{transition class}, a set that contains all the candidate transitions, to describe more general forms of prior information.
Also, we define the concept of \emph{separation} to quantify whether the information is enough to distinguish different labels.
With these concepts, we are able to study a variety of learning scenarios with unknown, non-invertible and instance-dependent transitions in a unified way. 
We show this under the \emph{realizability assumption} (also called \emph{separable} in linear classification), a commonly made assumption (such as \citep{ABHU15, GhoshMaSa15, LiuDi14}) that assumes that the true classifier is in the hypothesis space.

Our goal is to develop a \emph{unified theoretical framework} that can (i) provide learnability conditions for general indirect supervision problems, (ii) describe 
what prior knowledge is needed about the transition, and (iii) characterize the difficulty of learning with indirect supervision. 


Specifically, in this paper, our main contribution includes:
\begin{enumerate}[leftmargin=*]
    \item We decompose the learnability condition of a general indirect supervision problem into three aspects: \emph{complexity}, \emph{consistency} and \emph{identifiability} and provide 
    a unified learning bound for the problem (Theorem \ref{learnability-conditions}).
    \item We propose a simple yet powerful concept called \emph{separation}, which encodes the prior knowledge about the transition using statistical distance between distributions over the annotation space and uses it to characterize \emph{consistency} and \emph{identifiability} 
    (Theorem \ref{separation}).
    \item We formalize two ways to achieve separation: total variation and joint supervision, and use them 
    to derive concrete novel results of practical learning problems of interest (Section \ref{total-variation} and \ref{joint-supervision}).
\end{enumerate}

All proofs of the theoretical results are presented in the supplementary material.
\section{Related Work}

\paragraph{Specific Indirect Supervision Problems.} 

Our work is motivated by many previous studies on the problem of learning in the absence of gold labels. 
Specially, the problem of classification under \emph{label noise} dates back to \citep{AngluinLa88} and has been studied 
extensively 
over the past decades.
Our work is mostly related to 
(i) Theoretical analysis of PAC guarantees and consistency of loss functions, including
learning with bounded noise
\citep{MassartNe06, Kearns98, ABHU15},
and instance-dependent noise
\citep{RalaivolaDeMa06, MenonRoNa18, CLRT17}.
(ii) Algorithms for 
learning from noisy labels, 
including using the inverse information of the transition
\citep{NDRT13, RooyenWi18},
and inducing predictions of noisy label (which is more similar to our formulation)
\citep{BootkrajangKa12, SBPBF14}.

\emph{Superset} (also called \emph{partial label}) problems, where the annotation is given as a subset of the annotation space, arises in various forms in standard multiclass classification and structured prediction
\citep{CourSaTa11,
Cid-SueiroGaSa14,
INHS17,
NHFR19}.
While it is possible to extend some approaches in the theory of noisy problems to the superset case, the superset problem focuses on the case of a large and complex annotation space, and some of the assumptions (such as ``known transition") would be too strong in practice.
On the theoretical side, \citep{CourSaTa11} defines \emph{ambiguity degree} to characterize the learning bound. 
\citep{LiuDi14} provides an insightful discussion of the PAC-learnability of the superset problem and proposes the concept of \emph{induced hypothesis}. 
This two papers motivate the approach pursued in this paper.

\paragraph{Frameworks for Indirect Supervision.} 
Our supervision scheme is conceptually similar to 
\citep{SteinhardtLi15, RFDL16},
which model the label as a latent variable of the indirect supervision signal. However, the discussion is restricted to the exponential family model.
\cite{CGRS10,CSGR10} also propose a framework and algorithms to supervise a structured learning problem with an indirect supervision, which is modeled as a binary random variable associated with the gold label. Our work extends the binary indirect signal to a multiclass signal and gives a theoretical treatment to this general learning problem.
\citep{Cid-Sueiro12, Cid-SueiroGaSa14} 
study the problem of designing consistent loss functions for superset problems when the transition (aka \emph{mixing}) matrix is \emph{partially known}. The discussion can be applied to a wider range of problems such as noisy and semi-supervised learning. 
Our framework can also be compared to the multitask learning framework proposed in \citep{Ben-DavidBo07, Ben-DavidBo03}, which defines a notion called the $\+F$-relatedness to describe the relation machine learning tasks through some deterministic functions within the instance space $\+X$. As contrast, our framework studies the probabilistic transitions between different domains (from the label space $\+Y$ to the annotation space $\+O$) and our gold label $Y$ is not observable, which is different than a multitask setting.
Our goal is mostly related to \citep{RooyenWi18}, which further develops the ideas from \citep{SBHy13, 
Cid-SueiroGaSa14,
NDRT13} 
and develops a general framework of learning from data with \emph{reconstructible} corruption, using the inverse of a known, instance-independent transition matrix, to construct unbiased estimator of the classification loss and derive generalization bounds. Our study aims to relax these assumption on the transition matrix, especially when the label space and/or annotation space is large and the estimation of the transition matrix could be difficult.

\section{Preliminaries} \label{notation}

We will use $\P(\cdot)$ to denote probability, $\E[\cdot]$ to denote expectation, $\1\{\cdot\}$ to denote the indicator function and $p(\cdot)$ to denote the density function or more generally, the Radon–Nikodym derivative.

We denote the source variable as $X$, which takes value in an input space $\mathcal{X}$ and denote the target label as $Y$, which takes value in a label space $\mathcal{Y}$. We assume $|\mathcal{Y}|=c$ is finite and identify the elements in $\mathcal{Y}$ as $\{y_1,y_2,\dots,y_c\}$. The goal is to learn a mapping $h_0:\mathcal{X}\rightarrow \mathcal{Y}$. The \emph{hypothesis class} $\mathcal{H}$ contains candidate mappings $h:\mathcal{X}\mapsto\mathcal{Y}$. The loss function for hypothesis $h$ and sample $(x,y)$ is denoted as $\ell(h(x),y)$. The \emph{risk} of a hypothesis $h\in\mathcal H$ is defined as $R(h):=\E_{X,Y}[\ell(h(x),y)]$
, where $x$ is sampled independently from a (unknown) distribution $D_X$.
We will focus on the realizable case, i.e., there is a classifier $h_0\in\mathcal{H}$ such that $R(h)=0$. As in standard PAC-learning theory, we use the zero-one loss for the gold sample $(x,y)$ (although we may not observe $y$): $\ell(h(x),y))=\1\left\{h(x)\ne y\right\}$.

An \emph{annotation} $O$ (also called \emph{supervision signal}) is a random variable that is not independent with $Y$ (or equivalently, $O$ and $Y$ has positive mutual information). 
The dependence between $X$ and $O$ conditioned on $Y$ is allowed but not required. $O$ takes value in an annotation space denoted as $\mathcal O$. We also assume $|\mathcal{O}|=s<\infty$ and identify the elements in $\mathcal{O}$ as $\{o_1,o_2,\dots,o_s\}$. For convenience, when using $y_i$ and $o_i$ as subscripts, we regard $y_i,o_i$ as its index $i$. For example, for any indexed quantity $a_i$,  we will denote 
$a_{y_i}$ as $a_i$. 
We denote the probability simplex of dimension $s$ as: $\+D_\+O=\{w\in\mathbb{R}^s:\sum_{i=1}^sw_i=1,w_i\ge0\}$, which represents the set of all distributions over $\+O$.

Examples of annotation $O$: 
(i) In the \emph{noisy problem}, the true label is replaced (due to mislabeling or corruption) by another label with 
certain probabilities. Therefore, $\mathcal O=\mathcal Y$.
(ii) In the \emph{superset annotation} problem, the learner observes $o$ which is a subset of $\mathcal Y$ (hopefully but not necessarily $o$ contains the true label $y$). In this case, $\mathcal{O}=2^\mathcal{Y}$, the power set of $\+Y$. 

In our framework, the learner predicts $O$ using the graphical model shown in \cref{fig:annotation-graph}.
The conditional distribution of $O$ given $X=x$ and $Y=y$ can be identified by a mapping from $x$ to a \emph{transition matrix} $T_0(x):=[\P(O=o_j|X=x, Y=y_i)]_{ij}$. 
A key point of this paper is that in general we do \emph{not} assume that the learner (or learning algorithm) has full information of $T_0(x)$. 
Instead, we define a \emph{transition hypothesis} to be a candidate transition $T(x)$ (also denoted as $T$ for convenience) that maps the instance $x$ to a stochastic matrix of size $c\times s$.
For a fixed $x$, the $i^\text{th}$ row of a transition $T(x)$ represents a distribution over $\+O$, and is denoted as $(T(x))_i$.
The set of all candidate transition hypotheses is called the \emph{transition class}, denoted as $\mathcal{T}$.
We assume $T_0\in \mathcal{T}$.
When it is needed to distinguish transition hypothesis from classifiers in $\mathcal{H}$, we will call the latter one a \emph{base hypothesis}.
With a transition hypothesis $T$, a base hypothesis $\hat y=h(x)$ naturally induces a probability distribution $(T(x))_{h(x)}$. We call it \emph{induced hypothesis}, denoted as $T\circ h$.

\begin{figure}[H]
    \centering
    \tikzstyle{dir}=[->, line width=0. 2mm]
    \tikzstyle{bi}=[<->, line width=0. 2mm]
    \begin{tikzpicture}[auto]
        \node (left) at (-2,0) {$X$};
        \node (middle) at (0,0) {$\widehat Y$};
        \node (right) at (2.6,0) {$\widehat{\P}(O|X,\widehat{Y})$};

        \node at (-2.7 ,0) {source};
        \node at (0,0.5) {label};
        \node at (4.3,0) {annotation};

        \draw[dir] (left) .. controls (-0.8,-0.5) and (0.9,-0.5) .. (right);
        \draw[dir] (left) to [above] node {$\+H$} (middle);
        \draw[dir] (middle) to [above] node {$\+T$} (right);
        \draw[dir] (middle) to [above] node {$\+T$} (right);

    \end{tikzpicture}  
    \caption{Supervision Model. The learner predicts the label $\widehat{Y}$ of $X$ via $\+H$. To supervise using the observation of $(X,O)$, the learner uses $\widehat Y$ to induce a probabilistic prediction over $\+O$ via $\+T$.}
    \label{fig:annotation-graph}
\end{figure}
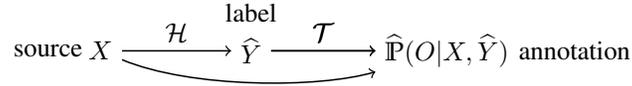

One may penalize $T\circ h$ by evaluating its prediction of $O$ on the dataset. More precisely, in our framework, the learner will be penalized by provided with an \emph{annotation loss} $\ell_\mathcal{O}(\widehat y,T,(x,o)):\mathcal{Y} \times \mathcal{T}\times \mathcal{X} \times \mathcal{O}\mapsto \mathbb{R}$. A natural example is the cross-entropy loss, which approximates the conditional probability of $O$: 
\begin{equation} \label{cross-entropy}
    \ell_\mathcal{O}(h(x),T,(x,o)):=-\log \P(o|x,h(x),T)
\end{equation}
The \emph{annotation risk} is defined as $R_\mathcal{O}(T\circ h):=\E_{x,o}[\ell_\mathcal{O}(h(x),T,(x,o))]$.
A \emph{training set} $S=\{(x^{(i)},o^{(i)})\}_{i=1}^m$ contains independent samples of $X$ and $O$.
The \emph{empirical annotation risk} associated with the training sample $S$ is then defined as $\widehat R_\mathcal{O}(h\circ T|S):=\frac{1}{m}\sum_{i=1}^m\ell_\mathcal{O}(h(x^{(i)}),T,(x^{(i)},o^{(i)}))$.

In summary, the learner's input includes: the spaces $\mathcal{X},\mathcal{Y},\mathcal{O}$, the hypothesis class $\+H$ and transition class $\+T$, the training set $S=\{x^{(i)},o^{(i)}\}_{i=1}^m$, and the loss functions $\ell$, $\ell_\mathcal{O}$.

A hypothesis class $\+H$ is said to be ($\+T$)-\emph{learnable} if there is a learning algorithm $\+A:\cup_{m=1}^\infty (\mathcal{X}\times\mathcal{O})^m\mapsto \mathcal H$ such that: for any distribution $D_X$ over $\+X$ and $T_0\in \+T$, when running $\+A$ on datasets $S^{(m)}$ of $m$ independent samples of $(X,O)$, we have $R(\+A(S^{(m)}))$ converges to 0 in probability as $m\rightarrow \infty$. 
In particular, we define the \emph{Empirical Risk Minimizer} to be a mapping $\operatorname{ERM}:\cup_{m=1}^\infty (\mathcal{X}\times\mathcal{O})^m\mapsto \mathcal H$ such that $\operatorname{ERM}(S)\in \argmin_{h\in \mathcal{H}, T\in \mathcal{T}}\widehat R_\mathcal{O}(h\circ T|S)$, where the $\operatorname{argmin}$ operator only returns the base hypothesis (although the empirical risk is minimized over both base and transition hypotheses). 
\section{General Learnability Conditions}

In this section, we present theorem \ref{learnability-conditions} that decomposes the learnability of a general indirect supervision problem into three aspects: \emph{complexity}, \emph{consistency} and \emph{identifiability}. After that, we provide proposition \ref{Natarajan2weakVC} to help verifying the complexity condition. The other two conditions will be further studied in the next section.


We assume $\ell_\mathcal O$ takes value in an interval $[0,b]$ for some constant $b>0$.
To characterize the learnability, a key step is to describe the complexity of the function class
\[
\ell_\mathcal O\circ \mathcal{T} \circ \mathcal H \eqdef \left\{(x,o)\mapsto \ell_\mathcal{O}(T,(x,h(x),o)):h\in\mathcal H, T \in \mathcal{T}\right\}
\]
To do so, we use the following generalized version of VC-dimension proposed in \citep{Baraud16}. It will be shown in proposition \ref{Natarajan2weakVC} that it enables us to bound the Rademacher complexity \citep{BartlettMe03} of $\ell_\+O \circ \+T \circ \+H$ (which provides the flexibility to study arbitrary loss function) via the standard VC dimension or Natarajan dimension \citep{Nataraj89b} (also see Chapter 29 of \citep{Ben-DavidSh14} for an introduction) of $\+H$ (which is in general easier to compute than the Rademacher complexity).

\begin{definition} We adopt the following definitions from \citep{Baraud16}:
    \begin{enumerate} [leftmargin=*]
    \item \textnormal{(shattering and VC-class)}  
    A class $\mathcal{C}$ of subsets of a set $\mathcal{Z}$ is said to \emph{shatter} a finite subset $Z\subseteq\mathcal{Z}$ if
    \[
    \left\{C\cap Z:C\in\mathcal{C}\right\} = 2^Z
    \]
    Moreover, $\+C$ is called a \emph{VC-class} with dimension no larger than $k$ if there exists an integer $k$ such that $\+C$ cannot shatter any subset of $\mathcal{Z}$ with more than $k$ elements.
    \item \textnormal{(weak VC-major)}
    The function class $\ell_\mathcal O \circ \+T \circ \mathcal H$ is said to be \emph{weak VC-major} with dimension $d$ if $d$ is the smallest integer such that for all $u\in \mathbb R$, the set family
    \[
    \mathcal C_u\eqdef\left\{\{ (x,o):\ell_\mathcal O(h(x),T,(x,o))>u\}:h\in\mathcal H, T\in\+T\right\}
    \]
    is a VC-class of $\mathcal X\times\mathcal O$ with dimension no larger than $d$.
    \end{enumerate}
\end{definition}

Now we are able to state the main result of  this section:

\begin{theorem}\label{learnability-conditions}
    If the following conditions are satisfied
    \begin{enumerate}[label=\normalfont{[C\arabic*]}, ref=\textbf{[C\arabic*]}]
        \item\label{Complexity} \textnormal{(Complexity)}
        $\ell_\mathcal O \circ \mathcal{T} \circ \mathcal H$ is weak VC-major with dimension $d<\infty$.
        \item \label{Consistency} \textnormal{(Consistency)}
        $h_0 \in \underset{h \in \mathcal H, T\in\mathcal{T}}{\argmin}
        \footnote{This $\operatorname{argmin}$ operator only returns the base hypothesis.}
        R_\mathcal O(T\circ h)$.
        \item\label{Identifiability} \textnormal{(Identifiability)}  
        $\eta \eqdef \underset{h \in \mathcal H, T\in \+T: R(h)>0}{\inf} \dfrac{R_\mathcal O(T\circ h)-\inf_{T\in\+T}R_\mathcal O(T\circ h_0)}{R(h)}>0$.
    \end{enumerate}
    Then, $\+H$ is $\+T$-learnable. That is, for any $\delta\in(0,1)$, with probability of at least $1-\delta$, we have:
    \begin{equation} \label{learning-bound}
        \begin{aligned}
            R(\operatorname{ERM}(S^{(m)})) 
            \le \frac {2b}\eta \left( 
                 \sqrt{\frac{2\overline{\Gamma}_m(d)}{m}}+\frac{4\overline{\Gamma}_m(d)}{m} 
                 + \sqrt{\frac{2\log(4/\delta)}{m}} \right)
        \end{aligned}
    \end{equation}
    where $\overline{\Gamma}_m(d)$ is defined in \citep{Baraud16} by 
    \[
    \begin{aligned} \label{gamma-bar}
         \overline{\Gamma}_m(d) \eqdef \log \left[2\sum_{j=0}^{\min\{d,m\}}\binom{m}{j}\right] = d\log m(1+o(1)) \,\, \text{as} \,\, m\rightarrow \infty
    \end{aligned}
    \]
    This implies $R(\operatorname{ERM}(S^{(m)}))\rightarrow 0$ in probability as $m\rightarrow \infty$.
\end{theorem}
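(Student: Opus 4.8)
The plan is the standard empirical-risk-minimization argument, but carried out in terms of the \emph{annotation} risk $R_\mathcal O$ (which the learner can estimate) and then transferred back to the target risk $R$ via the identifiability constant $\eta$. Write $\hat h := \operatorname{ERM}(S^{(m)})$ and, using the definition of the ERM, let $\hat T \in \mathcal T$ be a transition hypothesis such that $(\hat h,\hat T)$ attains $\min_{h\in\mathcal H, T\in\mathcal T}\widehat R_\mathcal O(h\circ T\mid S^{(m)})$. If $R(\hat h)=0$ there is nothing to prove, so assume $R(\hat h)>0$. Set $R_\mathcal O^\star := \inf_{h\in\mathcal H, T\in\mathcal T}R_\mathcal O(T\circ h)$; the consistency condition \ref{Consistency} says $\inf_{T\in\mathcal T}R_\mathcal O(T\circ h_0)=R_\mathcal O^\star$ and that this infimum is attained by some $T^\star$ with $R_\mathcal O(T^\star\circ h_0)=R_\mathcal O^\star$. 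Hence the numerator in \ref{Identifiability} is exactly the excess annotation risk $R_\mathcal O(T\circ h)-R_\mathcal O^\star$, and applying \ref{Identifiability} to the pair $(\hat h,\hat T)$ (legitimate since $R(\hat h)>0$ and $\eta>0$) gives
\[
R(\hat h)\;\le\;\frac1\eta\bigl(R_\mathcal O(\hat T\circ\hat h)-R_\mathcal O^\star\bigr).
\]

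Next I would bound the excess annotation risk by a uniform deviation. Inserting the empirical risks and using the ERM optimality $\widehat R_\mathcal O(\hat h\circ\hat T)\le\widehat R_\mathcal O(h_0\circ T^\star)$,
\[
R_\mathcal O(\hat T\circ\hat h)-R_\mathcal O^\star\;\le\;\bigl(R_\mathcal O(\hat T\circ\hat h)-\widehat R_\mathcal O(\hat h\circ\hat T)\bigr)+\bigl(\widehat R_\mathcal O(h_0\circ T^\star)-R_\mathcal O(T^\star\circ h_0)\bigr),
\]
where the first bracket is at most $\sup_{h,T}\bigl(R_\mathcal O(T\circ h)-\widehat R_\mathcal O(h\circ T)\bigr)$ and the second at most $\sup_{h,T}\bigl(\widehat R_\mathcal O(h\circ T)-R_\mathcal O(T\circ h)\bigr)$. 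Writing $\Delta_m$ for the two-sided uniform deviation $\sup_{h\in\mathcal H,T\in\mathcal T}\bigl|R_\mathcal O(T\circ h)-\widehat R_\mathcal O(h\circ T)\bigr|$, this yields $R_\mathcal O(\hat T\circ\hat h)-R_\mathcal O^\star\le 2\Delta_m$, and combining with the previous display, $R(\hat h)\le(2/\eta)\,\Delta_m$.

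It then remains to control the two one-sided suprema using the complexity condition \ref{Complexity}. Since $\ell_\mathcal O$ takes values in $[0,b]$ and $\ell_\mathcal O\circ\mathcal T\circ\mathcal H$ is weak VC-major with dimension $d$, the concentration/maximal inequality for weak VC-major classes established in \citep{Baraud16} (the same device that underlies Proposition \ref{Natarajan2weakVC}) bounds each one-sided supremum, with probability at least $1-\delta/2$, by $b\bigl(\sqrt{2\overline{\Gamma}_m(d)/m}+4\overline{\Gamma}_m(d)/m+\sqrt{2\log(4/\delta)/m}\bigr)$; a union bound then gives $\Delta_m\le b\bigl(\sqrt{2\overline{\Gamma}_m(d)/m}+4\overline{\Gamma}_m(d)/m+\sqrt{2\log(4/\delta)/m}\bigr)$ with probability at least $1-\delta$. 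Substituting into $R(\hat h)\le(2/\eta)\,\Delta_m$ produces \eqref{learning-bound}. The convergence assertion is immediate afterwards: $\overline{\Gamma}_m(d)=d\log m\,(1+o(1))$, so the right-hand side is $O\bigl(\sqrt{(d\log m)/m}\bigr)\to 0$.

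I expect the only genuine obstacle to be this last step: importing the weak VC-major concentration bound from \citep{Baraud16} and instantiating it with precisely the stated constants (in particular the bookkeeping that converts the per-direction confidence $\delta/2$ into the $\log(4/\delta)$ term). The decomposition in the first three paragraphs is routine once one commits to measuring progress through the observable risk $R_\mathcal O$ rather than the unobservable $R$; its two load-bearing ingredients are the ERM optimality and the implication — supplied by \ref{Identifiability} — that a small excess annotation risk forces a small target risk.
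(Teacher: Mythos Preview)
Your decomposition is the same as the paper's---reduce to the excess annotation risk of the ERM pair, then divide by $\eta$ via \ref{Identifiability}---but the concentration step is organized differently, and that difference matters for the constants you are aiming to reproduce.

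First, the paper does \emph{not} bound the second bracket $\widehat R_\mathcal O(h_0\circ T^\star)-R_\mathcal O(T^\star\circ h_0)$ by a uniform deviation: since $(h_0,T^\star)$ is a fixed pair, plain Hoeffding gives $b\sqrt{\log(4/\delta)/(2m)}$ with probability $1-\delta/2$. Only the first bracket needs uniform control, and for it the paper invokes Theorem~26.5 of \citep{Ben-DavidSh14} (McDiarmid plus symmetrization) to obtain, with probability $1-\delta/2$,
\[
R_\mathcal O(\hat T\circ\hat h)-\widehat R_\mathcal O(\hat h\circ\hat T)\;\le\;2\,\mathfrak R_m(\ell_\mathcal O\circ\mathcal T\circ\mathcal H)+b\sqrt{\tfrac{2\log(4/\delta)}{m}}.
\]
Adding the two pieces gives the excess bound $2\mathfrak R_m+2b\sqrt{2\log(4/\delta)/m}$. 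Your symmetric route through $2\Delta_m$ would instead produce $4\mathfrak R_m+2b\sqrt{2\log(4/\delta)/m}$ (symmetrization contributes a factor $2$ in front of $\mathfrak R_m$ for \emph{each} one-sided supremum), and hence $4b/\eta$ rather than $2b/\eta$ in front of the complexity terms; the claimed per-direction bound $b(\sqrt{2\overline\Gamma_m(d)/m}+4\overline\Gamma_m(d)/m+\sqrt{2\log(4/\delta)/m})$ is too tight by this factor of $2$.

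Second, Baraud's Theorem~2.1 is a bound on the \emph{expected} Rademacher complexity (the paper extracts $m\mathfrak R_m\le\sigma\log(\mathrm e b/\sigma)\sqrt{2m\overline\Gamma_m(d)}+4b\overline\Gamma_m(d)\le b\sqrt{2m\overline\Gamma_m(d)}+4b\overline\Gamma_m(d)$), not a tail inequality for the empirical process. So the paper's concentration step is genuinely two-stage: Theorem~26.5 supplies the high-probability bound in terms of $\mathfrak R_m$, and Baraud then controls $\mathfrak R_m$ deterministically. Your single appeal to ``the concentration/maximal inequality \ldots\ in \citep{Baraud16}'' conflates these two ingredients; once you separate them and use Hoeffding for the fixed pair, you recover exactly \eqref{learning-bound}.
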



Bound \eqref{learning-bound} suggests that the difficulty of the learning can be characterized by (i) the identifiability level $\eta$, which mainly depends on the nature of the indirect supervision signal and the learner's prior information of the transition hypothesis, which will be further studied in the next section. (ii) the weak VC-major $d$ of $\ell_\mathcal O \circ T \circ \mathcal H$, which depends on the modeling choice. We present the following results that bound $d$ by the Natarajan dimension of $\mathcal{H}$ and the weak-VC major dimension of the class:
\[
\ell_\mathcal O\circ \mathcal{T} \eqdef \left\{(x,\widehat y, o)\mapsto \ell_\mathcal{O}(\widehat y,T,(x,o)): T \in \mathcal{T}\right\}
\]

\begin{proposition} \label{Natarajan2weakVC}
    Suppose the Natarajan dimension of $\mathcal{H}$ is $d_\mathcal H<\infty$ and the weak-VC major dimension of $\ell_\+O \circ \+T$ is $d_\+T<\infty$.
    Then, the weak-VC major dimension of $\ell_\mathcal O \circ \+T \circ \mathcal H$, 
    $d$, 
    can be bounded by:
    \[
    d \le 2\left( (d_\mathcal{H}+d_\+T)\log(6(d_\mathcal{H}+d_\+T))+2d_\mathcal{H}\log c \right)
    \;
    \text{where}
    \;
    c=|\+Y|
    \]
\end{proposition}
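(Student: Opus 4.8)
The plan is to bound, uniformly over $u \in \mathbb{R}$, the VC dimension of the set family
\[
\mathcal{C}_u = \bigl\{ \{ (x,o) : \ell_\mathcal{O}(h(x),T,(x,o)) > u \} : h \in \mathcal{H},\ T \in \mathcal{T} \bigr\},
\]
by a two-stage counting argument that separates the contribution of the base hypothesis from that of the transition hypothesis. Fix $u$ and a finite set $Z = \{(x^{(1)},o^{(1)}),\dots,(x^{(n)},o^{(n)})\} \subseteq \mathcal{X} \times \mathcal{O}$. The crucial observation is that $\ell_\mathcal{O}(h(x),T,(x,o))$ depends on $h$ only through the single value $h(x)$, so the trace a pair $(h,T)$ leaves on $Z$ is determined by the label pattern $\bigl( h(x^{(1)}),\dots,h(x^{(n)}) \bigr) \in \mathcal{Y}^n$ together with $T$. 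Writing $\Pi_{\mathcal H}(Z) \subseteq \mathcal{Y}^n$ for the set of label patterns realizable by $\mathcal{H}$ on the $x$-coordinates of $Z$, this yields
\[
\bigl| \{ C \cap Z : C \in \mathcal{C}_u \} \bigr| \;\le\; \sum_{\widehat y \in \Pi_{\mathcal H}(Z)} \Bigl| \bigl\{ \{ j : \ell_\mathcal{O}(\widehat y^{(j)}, T, (x^{(j)}, o^{(j)})) > u \} : T \in \mathcal{T} \bigr\} \Bigr|.
\]

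For the outer sum I would apply Natarajan's lemma: since the Natarajan dimension of $\mathcal{H}$ is $d_{\mathcal H}$, one has $|\Pi_{\mathcal H}(Z)| \le n^{d_{\mathcal H}} c^{2 d_{\mathcal H}}$. For each fixed pattern $\widehat y$, the $n$ augmented points $(x^{(j)}, \widehat y^{(j)}, o^{(j)}) \in \mathcal{X} \times \mathcal{Y} \times \mathcal{O}$ are acted on by the family $\bigl\{ \{ (x,\widehat y, o) : \ell_\mathcal{O}(\widehat y, T, (x,o)) > u \} : T \in \mathcal{T} \bigr\}$, which by assumption is a VC-class of dimension at most $d_{\mathcal T}$; hence by the Sauer--Shelah lemma the inner cardinality is at most $\sum_{i=0}^{d_{\mathcal T}} \binom{n}{i} \le (e n / d_{\mathcal T})^{d_{\mathcal T}}$ (and at most $2^n$ trivially, so we may assume $n \ge d_{\mathcal T}$). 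Multiplying the two bounds, $\mathcal{C}_u$ realizes at most $n^{d_{\mathcal H}} c^{2 d_{\mathcal H}} (e n / d_{\mathcal T})^{d_{\mathcal T}}$ subsets of $Z$.

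If $\mathcal{C}_u$ shatters $Z$, this count is at least $2^n$; taking logarithms and rearranging gives an inequality of the form $n \le (d_{\mathcal H} + d_{\mathcal T}) \log n + 2 d_{\mathcal H} \log c + O(d_{\mathcal H} + d_{\mathcal T})$. I would close the argument with the elementary fact that $n \le a \log n + b$ with $a \ge 1$ forces $n = O(a \log a + b)$, applied with $a = d_{\mathcal H} + d_{\mathcal T}$ and $b = 2 d_{\mathcal H} \log c + O(d_{\mathcal H} + d_{\mathcal T})$; since nothing in the bound depends on $u$ or on $Z$, this bounds the weak VC-major dimension $d$ and, after tracking constants, gives exactly the stated estimate.

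The conceptual content is entirely in the first paragraph: the product decomposition of the shattering count into a Natarajan factor for $\mathcal{H}$ and a Sauer--Shelah factor for $\ell_\mathcal{O} \circ \mathcal{T}$, which is forced by the fact that the annotation loss sees a classifier only through its finitely many values on the sample. I expect the only genuine obstacle to be bookkeeping in the final step: pinning down the base of the logarithm, handling the degenerate cases $d_{\mathcal H} = 0$, $d_{\mathcal T} = 0$ or $c = 1$, and selecting the precise version of the "$n \le a \log n + b$" inversion lemma that produces the coefficient $2$ out front and the factor $6$ inside the logarithm in the statement.
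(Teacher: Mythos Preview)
Your proposal is correct and follows essentially the same approach as the paper: decompose the shatter count on a sample of size $n$ into a Natarajan factor $n^{d_{\mathcal H}} c^{2d_{\mathcal H}}$ for $\mathcal H$ times a Sauer--Shelah factor $(en/d_{\mathcal T})^{d_{\mathcal T}}$ for $\ell_{\mathcal O}\circ\mathcal T$, compare to $2^n$, and invert the resulting $n \le (d_{\mathcal H}+d_{\mathcal T})\log n + \text{const}$ inequality. The paper carries out the inversion by linearizing $\log n$ at the point $6(d_{\mathcal H}+d_{\mathcal T})$, which is exactly the bookkeeping you anticipate will produce the constants $2$ and $6$.
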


The reason that we do \emph{not} study the complexity of $\+T$ separately is that the annotation loss may be independent of $T$ (i.e., $\ell_\mathcal{O}(\widehat y,T_1,(x,o))=\ell_\mathcal{O}(\widehat y,T_2,(x,o))$ for any $T_1,T_2\in \+T$). See proposition \ref{Robustness} for an example of such a loss. 

To show applications of proposition \ref{Natarajan2weakVC}, we study the following cases:

\begin{example} \label{vc-example}
In the following cases, we first compute/bound $d_\+T$, then $d$ can be bounded by $d_\+H$:

    \begin{enumerate} [leftmargin=*]
        \item When the true transition is known \emph{or} when the annotation loss function only depends on $(\widehat y, o)$, we have $d_\+T=0$; hence $d \le 2d_\mathcal{H}(\log(6d_\mathcal{H})+2\log c)$. This is conceptually similar to the Lemma 3.4 in \citep{LiuDi14}, which bounds the VC-dimension of the induced hypothesis class for the noise-free superset problem. 
        \item When all transition hypotheses in $\+T$ are instance-independent \emph{and} the annotation loss only depends on $(T,\widehat y, o)$ (e.g., the cross-entropy loss defined in \eqref{cross-entropy}), then $d_\+T$ can be trivially bounded by $d_\+T\le cs=|\+Y\times\+O|$; hence $d \le 2((d_\mathcal{H}+cs)\log(6(d_\mathcal{H}+cs))+2d_\mathcal{H}\log c)$.
        \item Suppose the instance is embedded in a vector space $\+X=\mathbb{R}^p$. Consider the problem (Example 5.1.3 in \cite{MenonRoNa18}) of binary classification with a uniform noise rate which is modeled as a Logistic regression: $\P(O\ne y|x,y)=S(w^\T x)$ where $S$ is the sigmoid function and $w$ is the parameter. Then the cross-entropy loss becomes: $-\1\{o\ne\widehat{y}\}\log(S(w^\T x))-\1\{o=\widehat{y}\}\log(1-S(w^\T x))$. We have $d_\+T\le 2p+2$. See supplementary material for a proof.
    \end{enumerate}
\end{example}


\section{Separation}

Throughout this section we assume \ref{Complexity}
of Theorem \ref{learnability-conditions}
holds. We will first propose a concept called \emph{separation}, which provides an intuitive way to understand the learnability and helps to verify \ref{Consistency} and \ref{Identifiability}; then we study two ways to ensure separation, and their application in real problems.

\subsection{Learning by Separation}

Without any prior knowledge, the transition class will contain all possible transitions.
In this case, learnability cannot be ensured since a wrong label $\widehat{y}$ can also induce a good prediction of $O$ via an incorrect transition hypothesis. 
Hence, certain kind of prior knowledge is needed to restrict the range of $\mathcal{T}$. To formalize this idea, we first introduce an extension of the KL-divergence.

\begin{definition} [KL-divergence between Two Sets of Distributions]
    Given two sets of distributions $\mathcal{D}_1$ and $\mathcal{D}_2$, we define the \emph{KL-divergence} between them as:
    \[ 
    \KL{\mathcal{D}_1}{\mathcal{D}_2} \eqdef \inf_{D_1\in\mathcal{D}_1,D_2\in\mathcal{D}_2} \KL{D_1}{D_2}
    \]
\end{definition}
Now we are able to state the main result of this section:

\begin{theorem} [Separation] \label{separation}
    For all $x\in\+X$, we denote the \emph{induced distribution families} by label $y_i$ as $\+D_i(x)\eqdef \{(T(x))_i:T\in\+T\}\subseteq \+D_\+O$ 
    (recall that $(T(x))_{i}$ is the $i^{\text{th}}$ row of $T(x)$),
    and the set of all possible predictions of the label as $\+H(x)\eqdef\{h(x):h\in\+H\}\subseteq\+Y$. Suppose
    \begin{equation} \label{separation-condition}
    \gamma \eqdef \inf_{(x,i,j):p(x,y_i)>0,j\ne i,y_j\in\+H(x)} \KL{\mathcal{D}_i(x)}{\mathcal{D}_j(x)} > 0
    \end{equation}
    Then $\mathcal{H}$ is learnable from the observations of $(X,O)$ with $\eta\ge \gamma > 0$ via the ERM of cross-entropy loss \eqref{cross-entropy}. We call $\gamma$ the \emph{separation degree}. 
    
    Moreover, if \eqref{separation-condition} is not satisfied, then there exists a sequence of transitions $\{T^{(k)}\}_k\,(T^{(k)}\in\+T)$ and distributions $\{D^{(k)}_X\}_k$ over $\+X$ such that $\lim_k\eta^{(k)}=0$ 
    , where $\eta^{(k)}$ is defined the same as $\eta$ in \ref{Identifiability}, with the expectation
    (in the definition of the risk functions) being taken according to $T^{(k)}$ and $D^{(k)}_X$.
\end{theorem}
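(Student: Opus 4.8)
The plan is to prove the two halves separately. For the positive direction, the key observation is that with the cross-entropy loss \eqref{cross-entropy}, the annotation risk admits a clean information-theoretic decomposition: for any fixed $x$, writing $p_x(\cdot)$ for the true conditional law of $O$ given $X=x$ (which is $\sum_i p(y_i|x)(T_0(x))_i$, a mixture of the true induced rows), one has
\[
\E_{o\sim p_x}\left[-\log \P(o\mid x, h(x), T)\right] = H(O\mid X=x) + \KL{p_x}{(T(x))_{h(x)}},
\]
so that $R_\+O(T\circ h) = \E_X[H(O\mid X)] + \E_X[\KL{p_x}{(T(x))_{h(x)}}]$. Taking the infimum over $T\in\+T$ and noting $T_0\in\+T$, the "baseline" term $\inf_{T}R_\+O(T\circ h_0)$ equals the entropy term (the KL part can be driven to zero when $h(x)=$ the true label and $T=T_0$ — here I would need realizability, $R(h_0)=0$, so $h_0(x)$ is the a.s.\ value of $Y$ given $X=x$; I should check this carefully, possibly invoking that on $\{p(x,y_i)>0\}$ the label is a.s.\ determined). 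This already yields Consistency \ref{Consistency}.

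For Identifiability \ref{Identifiability}, I would lower-bound the numerator $R_\+O(T\circ h) - \inf_T R_\+O(T\circ h_0) = \E_X[\KL{p_x}{(T(x))_{h(x)}}]$ by $\gamma$ times $R(h)$. The idea: on the event $\{h(X)\neq Y\}$, say $X=x$ with true label $y_i$ (so $p(x,y_i)>0$) and $h(x)=y_j\neq i$ with $y_j\in\+H(x)$; there the true mixture $p_x$ assigns weight $p(y_i|x)>0$ to... actually the cleaner route is to bound $\KL{p_x}{(T(x))_j}$ from below by $\KL{\+D_i(x)}{\+D_j(x)}\ge\gamma$ using that $p_x$ itself need not lie in $\+D_i(x)$ — so I should instead use a convexity/mixture argument or, more simply, handle the deterministic-label case where $p_x=(T_0(x))_i\in\+D_i(x)$ and the general case by a separate estimate. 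This mismatch (mixture $p_x$ vs.\ the family $\+D_i(x)$) is the main obstacle; I expect realizability again forces $p_x$ into $\+D_i(x)$ up to the a.s.\ label, making it go through. Integrating, $\E_X[\KL{p_x}{(T(x))_{h(x)}}]\ge \gamma\,\P(h(X)\neq Y) = \gamma R(h)$, giving $\eta\ge\gamma$, and then Theorem \ref{learnability-conditions} delivers the bound with $\eta\ge\gamma$.

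For the negative direction, suppose $\gamma=0$. Then there is a sequence $(x^{(k)}, i_k, j_k)$ with $p(x^{(k)},y_{i_k})>0$, $j_k\neq i_k$, $y_{j_k}\in\+H(x^{(k)})$, and $\KL{\+D_{i_k}(x^{(k)})}{\+D_{j_k}(x^{(k)})}\to 0$; pick $T^{(k)}\in\+T$ and a competing row in $\+D_{j_k}(x^{(k)})$ nearly matching $(T^{(k)}(x^{(k)}))_{i_k}$. I would take $D_X^{(k)}$ concentrated (or nearly so) on $x^{(k)}$, with the label there being $y_{i_k}$ and transition $T^{(k)}$, and a hypothesis $h^{(k)}\in\+H$ with $h^{(k)}(x^{(k)})=y_{j_k}$ (exists since $y_{j_k}\in\+H(x^{(k)})$), so $R(h^{(k)})$ is bounded away from $0$ while the numerator $R_\+O(T^{(k)}\circ h^{(k)}) - \inf_T R_\+O(T^{(k)}\circ h_0)\le \KL{(T^{(k)}(x^{(k)}))_{i_k}}{(T(x^{(k)}))_{j_k}}\to 0$; hence $\eta^{(k)}\to 0$. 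A subtlety to address: I must keep $h_0$ realizable for the constructed distribution, which is why I let the true label at $x^{(k)}$ be $y_{i_k}$ and must ensure some $h\in\+H$ realizes it — this holds since $p(x^{(k)},y_{i_k})>0$ means $y_{i_k}\in\+H(x^{(k)})$ under realizability with respect to the original $D_X$; if not I would perturb $D_X^{(k)}$ to a mixture so realizability is maintained. I would also need $\inf_T R_\+O(T\circ h_0)$ to stay controlled, which follows from the same entropy decomposition.
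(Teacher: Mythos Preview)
Your proposal is correct and follows essentially the same route as the paper. The one point you flag as the ``main obstacle'' --- that $p_x$ is a mixture $\sum_i p(y_i\mid x)(T_0(x))_i$ rather than an element of $\+D_i(x)$ --- is not an obstacle at all: realizability $R(h_0)=0$ means $Y=h_0(X)$ almost surely, so $p_x=(T_0(x))_{h_0(x)}\in\+D_{h_0(x)}(x)$ exactly, and the paper simply writes this down from the start without ever introducing the mixture form; your entropy-plus-KL decomposition and the paper's direct cross-entropy subtraction are then the same computation. For the negative direction your point-mass construction is likewise what the paper does, and your worry about maintaining realizability is resolved by the observation that $p(x^{(k)},y_{i_k})>0$ forces $h_0(x^{(k)})=y_{i_k}$, so $h_0$ itself has zero risk under $D_X^{(k)}$.
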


\begin{figure}[h!]
    \centering
    \subfigure[]{\includegraphics[width=0.255\textwidth]{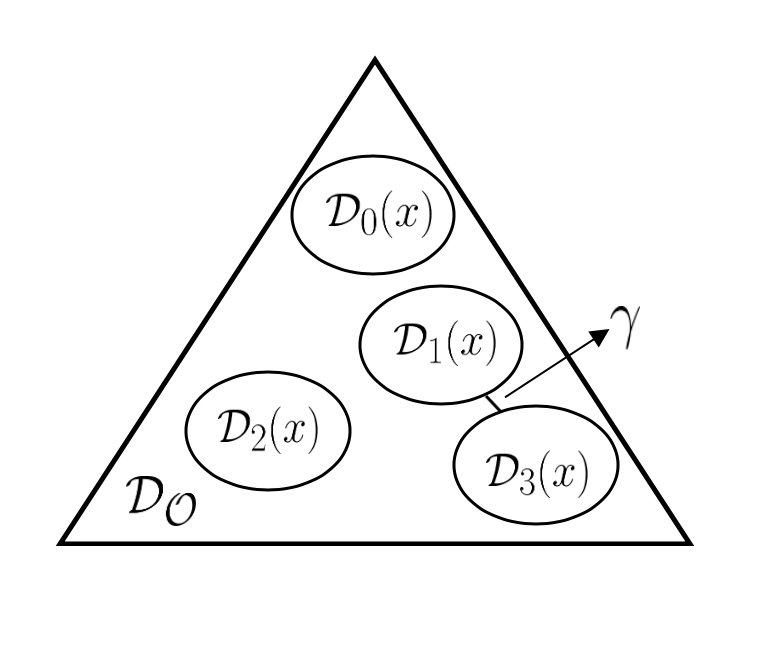}}
    \subfigure[]{\includegraphics[width=0.739\textwidth]{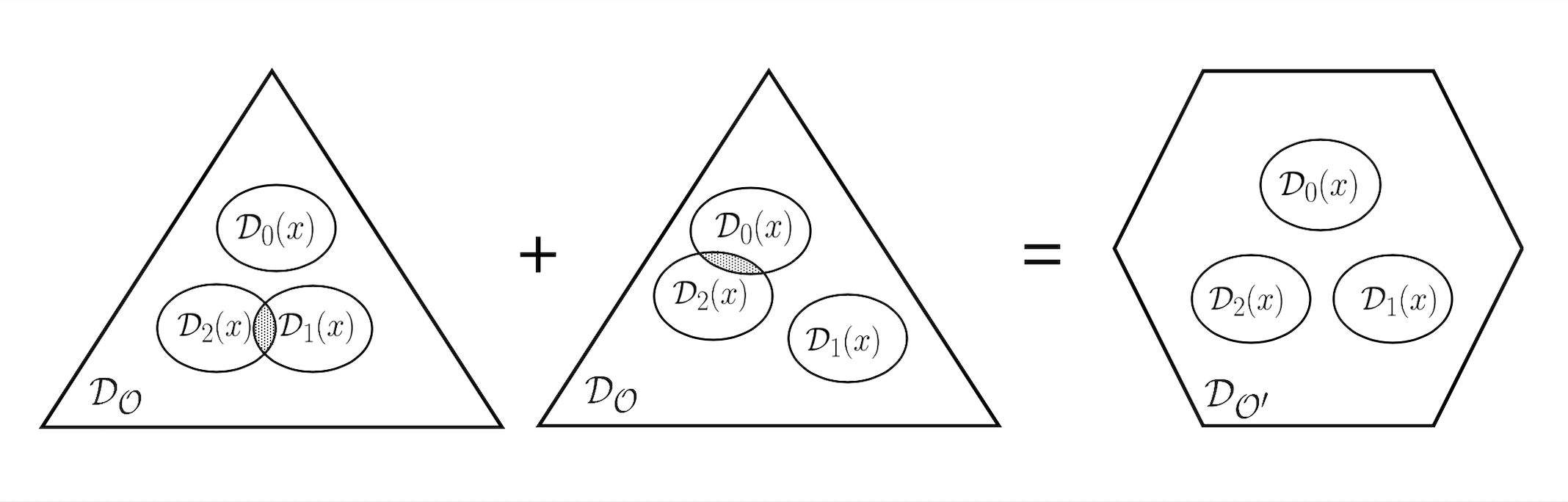}}
    \caption{
        \textbf{(a)} Illustration of \emph{separation}. A (predicted) label $y_i$ will induce a distribution family over $\+O$ called $\mathcal{D}_i(x)$. Different families are separated by a minimal ``distance'' $\gamma$. 
        \textbf{(b)} Illustration of \emph{joint supervision}. By adding new supervision signals, separation of particular pairs of labels is preserved.
    }
    \label{fig:foobar}
\end{figure}

Theorem \ref{separation} is important in two ways: 
(i) It provides a way to characterize the prior knowledge of the learner about the transition using the KL-divergence and reveals its connection with the identifiability of labels. 
(ii) The ``moreove'' 
result shows if separation is not satisfied, then the induced distribution of $O$ by different labels can be arbitrarily close,
and hence the learning of $Y$ from $O$ can be arbitrarily difficult.
An illustration of separation is shown in \cref{fig:foobar} (a).
Yet, a drawback of the cross-entropy loss used in theorem \ref{separation} is that it can be unbounded when there is a zero element in the transition matrix.
This problem will be partly solved in proposition \ref{Robustness} by introducing a different annotation loss. 

As the simplest application, we introduce the case where the transition is fully known to the learner. In this case, the induced distribution families $\+D_i(x)$ reduce to some points in the probability simplex.

\begin{example}[Full Information of the Transition]\label{Full-info}
    Suppose the transition $T(x)$ is known to the learner, i.e., $\+T=\{T_0\}$, by Theorem \ref{separation}, we know $\+H$ is learnable if 
    \[
    \inf_{(x,i,j):p(x)>0,\,i\ne j,y_j\in\+H(x)} \KL{(T_0(x))_i}{(T_0(x))_j} >0
    \]
    Notice that this is a weaker assumption than the invertibility assumption of $T_0(x)$, which is used in \cite{RooyenWi18} (called \emph{reconstructible corruption}). This is possible because we assume a deterministic rule for $X\rightarrow Y$ but a randomized process for $X,Y\rightarrow O$, hence the latter one could contain more information and is capable of encoding
    the deterministic rule even when $\+O$ is smaller than $\+Y$. 
    For a concrete example, consider the following constant transition matrix:
    \[
    T = 
    \begin{bmatrix}
        0.1 & 0.9 \\ 
        0.5 & 0.5 \\ 
        0.9 & 0.1
    \end{bmatrix}
    \]
    In this case, $|\+Y|=3>|\+O|=2$ and therefore $T$ is not right-invertible. However, since the distribution of $O$ induced by each labels $Y$ is known to learner, and this distribution could be estimated from the observation of $O$, the learner is able to recover the true label from the indirect supervision.
\end{example}

\subsection{Separation by Total Variation} \label{total-variation}

In this subsection, we introduce a way to guarantee separation by controlling the KL-divergence using total variation distance, which is done via the well-known Pinsker's inequality \citep{Tsybakov08}:

\begin{lemma} [Pinsker's inequality, proposed in \citep{pinsker}, see \citep{Tsybakov08} for an introduction]
    If $P$ and $Q$ are two probability distributions on the same measurable space $(\Omega,\mathcal{F})$, then
    \[
    \|P-Q\|_\text{TV} \le \sqrt{\KL{P}{Q}/2}
    \]
    where $\|\cdot\|_{\text{TV}}$ is the total variance distance: $\|P-Q\|_\text{TV} \eqdef \sup_{A\in\mathcal{F}} \|P(A)-Q(A)\|$.
    Moreover, if $\Omega$ is countable (in our case, $\Omega=\+O$ is finite and hence, then the total variance distance is equivalent to the $L^1$-distance in the sense that $\|P-Q\|_\text{TV} = \frac12\|P-Q\|_{1}$).
\end{lemma}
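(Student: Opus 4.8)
The plan is to prove Pinsker's inequality in two stages --- first reduce the general statement to the case of distributions on a two-point space, then dispatch that case by one-variable calculus --- and finally verify the ``moreover'' identity by a direct finite-sum computation.

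First I would fix a measurable set $A\in\mathcal F$ and form the coarsened distributions $\bar P=(P(A),P(A^c))$ and $\bar Q=(Q(A),Q(A^c))$ on the two-point space. The crucial tool is the data-processing (monotonicity) inequality for KL-divergence: coarsening the $\sigma$-algebra cannot increase the divergence, so $\KL{\bar P}{\bar Q}\le\KL{P}{Q}$. This is exactly the log-sum inequality (equivalently, convexity of $t\mapsto t\log t$ together with Jensen's inequality applied within each cell of the partition $\{A,A^c\}$). Since $|P(A)-Q(A)|=\|\bar P-\bar Q\|_{\mathrm{TV}}$, taking the supremum over $A$ recovers $\|P-Q\|_{\mathrm{TV}}$, so it suffices to establish the scalar inequality $2(p-q)^2\le\kappa(p,q)$ for all $p,q\in[0,1]$, where $\kappa(p,q)\eqdef p\log\frac pq+(1-p)\log\frac{1-p}{1-q}$ (with the conventions $0\log0=0$, and the bound being trivial when $q\in\{0,1\}\ne p$).

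Next I would prove the scalar inequality. Fix $p$ and set $f(q)\eqdef\kappa(p,q)-2(p-q)^2$ for $q\in(0,1)$. Then $f(p)=0$, and differentiating gives $f'(q)=\frac{q-p}{q(1-q)}+4(p-q)=(p-q)\left(4-\frac{1}{q(1-q)}\right)$. Since $q(1-q)\le\tfrac14$ on $(0,1)$, the second factor is nonpositive; hence $f'(q)\le0$ for $q<p$ and $f'(q)\ge0$ for $q>p$, so $q=p$ is a global minimum of $f$ and $f\ge f(p)=0$. Continuity extends this to the endpoints, yielding $2(p-q)^2\le\kappa(p,q)$ and therefore $\|P-Q\|_{\mathrm{TV}}\le\sqrt{\KL{P}{Q}/2}$.

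Finally, for the ``moreover'' part with $\Omega=\+O$ countable, I would take $A^\ast\eqdef\{o:p(o)>q(o)\}$; then $P(A)-Q(A)\le P(A^\ast)-Q(A^\ast)$ for every $A$, so the supremum defining $\|P-Q\|_{\mathrm{TV}}$ is attained at $A^\ast$ and equals $\sum_{o\in A^\ast}(p(o)-q(o))$. Because $\sum_o(p(o)-q(o))=0$, the positive and negative parts of $p-q$ carry equal mass, each equal to $\tfrac12\sum_o|p(o)-q(o)|=\tfrac12\|P-Q\|_1$, which is the claimed identity. I expect the only genuine subtlety to be the data-processing step --- invoking monotonicity of KL under the two-cell coarsening correctly (it is precisely the log-sum inequality); the remainder is a routine calculus argument and a bookkeeping identity.
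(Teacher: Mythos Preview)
Your proof is correct and is the standard textbook argument for Pinsker's inequality (reduction to the binary case via the data-processing/log-sum inequality, followed by a one-variable calculus verification). Note, however, that the paper does not actually prove this lemma: it is stated as a classical result with citations to \citep{pinsker} and \citep{Tsybakov08}, and is then invoked as a tool in the proofs of Propositions~\ref{Robustness} and~\ref{Evidence}. So there is nothing in the paper to compare your argument against; you have simply supplied a self-contained proof where the paper relies on the literature.
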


This lemma suggests we can ensure separation by controlling the $L^1$-distance. To show a concrete example, we introduce the \emph{concentration} condition. The intuition behind it is that the information of different labels in $\+Y$ is concentrated in relatively different sets of annotations. Formally: 

\begin{proposition}[Concentration] \label{Robustness}
    A sufficient condition for \eqref{separation-condition} is that for every $1\le i\le c$, there exists a set $S_i\subset \+O$ (we call them \emph{concentration sets}) such that
    \begin{equation} \label{Massart-condition}
    \gamma_C \eqdef \inf_{(i,j,x,T):T\in\+T, p(x)>0, j\ne i} \P_T(O\in S_i|x,y_i) -  \P_T(O\in S_j|x,y_i) > 0
    \end{equation}
    where $\P_T(\cdot)$ is the conditional probability defined by transition $T$. Under this condition, we can relate identifiability and separation degree by $\eta\ge \gamma\ge 2\gamma_C^2$. Since a condition imposed on all $T\in\+T$ can be regarded as an assumption imposed on the true transition $T_0$, condition \eqref{Massart-condition} can be rewritten as: 
    \begin{equation} \label{concentration-condition}
        \gamma_C=\inf_{(i,j,x):p(x,y_i)>0, i\ne j} \P(O\in S_i|x,y_i) -  \P(O\in S_j|x,y_i) > 0
    \end{equation}
    Also, in this case, one can ensure learnability by the ERM which minimizes the following transition-independent annotation loss 
    \begin{equation} \label{generalized-zero-one}
        \ell_\mathcal{O}(h(x),T,(x,o)) \eqdef \1\{o\notin S_{h(x)}\} 
    \end{equation}
    For this annotation loss, we can bound the identifiability level by $\eta\ge \gamma_C$.
\end{proposition}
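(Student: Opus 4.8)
The plan is to prove the proposition in two logically independent pieces. \emph{Piece A:} show that the concentration condition \eqref{Massart-condition} implies the separation condition \eqref{separation-condition} with the quantitative bound $\gamma\ge 2\gamma_C^2$, so that Theorem~\ref{separation} immediately yields $\eta\ge\gamma\ge 2\gamma_C^2$ for the cross-entropy ERM. \emph{Piece B:} verify \ref{Consistency} and \ref{Identifiability} directly for the tailored, transition-independent loss \eqref{generalized-zero-one}, which will give the sharper constant $\eta\ge\gamma_C$ and, together with the (assumed, and easily re-checked) complexity condition \ref{Complexity}, learnability of the corresponding ERM via Theorem~\ref{learnability-conditions}. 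The passage between \eqref{Massart-condition} and \eqref{concentration-condition} needs no real argument: $T_0\in\mathcal T$ gives one direction, and taking $\mathcal T$ to be the set of all transitions obeying \eqref{concentration-condition} (noting $p(x,y_i)>0\Rightarrow p(x)>0$) gives the other.

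For Piece A, fix $x$ with $p(x,y_i)>0$ and $j\ne i$ with $y_j\in\mathcal H(x)$. Because of the set-wise KL definition and Pinsker's inequality, it suffices to show $\|D_1-D_2\|_{\mathrm{TV}}\ge\gamma_C$ for every $D_1=(T_1(x))_i\in\mathcal D_i(x)$ and $D_2=(T_2(x))_j\in\mathcal D_j(x)$ with $T_1,T_2\in\mathcal T$: then $\KL{D_1}{D_2}\ge 2\gamma_C^2$, and taking the infimum gives $\KL{\mathcal D_i(x)}{\mathcal D_j(x)}\ge 2\gamma_C^2$, hence $\gamma\ge 2\gamma_C^2>0$. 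To obtain the TV bound, apply \eqref{Massart-condition} to $T_1$ (row $i$, sets $S_i,S_j$) and to $T_2$ (row $j$, sets $S_j,S_i$): this gives $D_1(S_i)-D_1(S_j)\ge\gamma_C$ and $D_2(S_j)-D_2(S_i)\ge\gamma_C$. Adding and regrouping, $\big(D_1(S_i)-D_2(S_i)\big)+\big(D_2(S_j)-D_1(S_j)\big)\ge 2\gamma_C$, so at least one bracketed quantity is $\ge\gamma_C$; since $\|D_1-D_2\|_{\mathrm{TV}}\ge|D_1(A)-D_2(A)|$ for any event $A$, taking $A=S_i$ or $A=S_j$ finishes it. Here $D_1$ and $D_2$ may arise from \emph{different} transitions, which is exactly why \eqref{Massart-condition} must be imposed over all of $\mathcal T$.

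For Piece B, the loss \eqref{generalized-zero-one} does not depend on $T$, so $R_\mathcal O(T\circ h)$ is a function of $h$ alone and $\inf_T R_\mathcal O(T\circ h_0)$ is just that common value at $h_0$. Realizability gives $Y=h_0(X)$ almost surely, hence $\P(O\in\cdot\mid x)=(T_0(x))_{h_0(x)}(\cdot)$ for $D_X$-a.e.\ $x$, and therefore $R_\mathcal O(T\circ h)-\inf_T R_\mathcal O(T\circ h_0)=\E_x\big[(T_0(x))_{h_0(x)}(S_{h_0(x)})-(T_0(x))_{h_0(x)}(S_{h(x)})\big]$. The integrand vanishes on $\{h(x)=h_0(x)\}$ and, by \eqref{concentration-condition} applied with $y_i=h_0(x)$ (valid because $p(x)>0$ forces $p(x,h_0(x))>0$), is $\ge\gamma_C$ on $\{h(x)\ne h_0(x)\}$. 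Thus the difference is $\ge\gamma_C\,\P(h(X)\ne h_0(X))=\gamma_C\,R(h)\ge 0$, which gives \ref{Consistency} (the difference is nonnegative, with equality at $h_0$) and \ref{Identifiability} with $\eta\ge\gamma_C$. Since \eqref{generalized-zero-one} depends only on $(h(x),o)$ we have $d_\mathcal T=0$, so \ref{Complexity} for this loss is controlled by the Natarajan dimension of $\mathcal H$ via Proposition~\ref{Natarajan2weakVC}, and Theorem~\ref{learnability-conditions} then delivers learnability.

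The main obstacle is the TV lower bound in Piece A: because $\mathcal D_i(x)$ and $\mathcal D_j(x)$ are \emph{families} of distributions indexed by $\mathcal T$, a separation cannot be read off directly from \eqref{Massart-condition}, which only compares two rows of a \emph{single} transition. The genuinely nonroutine step is the trick of summing the two one-sided inequalities for $T_1$ and $T_2$ and observing that their sum forces one of the $S_i$- or $S_j$-discrepancies between $D_1$ and $D_2$ to be at least $\gamma_C$; everything else is bookkeeping together with invocations of Pinsker's inequality, Theorem~\ref{separation}, and Theorem~\ref{learnability-conditions}.
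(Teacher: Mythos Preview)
Your proposal is correct and follows essentially the same two-piece structure as the paper's proof: Pinsker's inequality plus the concentration condition for the $\gamma\ge 2\gamma_C^2$ bound, and a direct risk-difference computation for the transition-independent loss giving $\eta\ge\gamma_C$. The only cosmetic difference is in how the total-variation lower bound is extracted in Piece~A: the paper bounds $\|D_i-D_j\|_1$ via the two disjoint sets $S_i\setminus S_j$ and $S_j\setminus S_i$ and then simplifies to $D_i(S_i)-D_i(S_j)+D_j(S_j)-D_j(S_i)\ge 2\gamma_C$, whereas you add the two one-sided inequalities, pigeonhole to a single event $S_i$ or $S_j$, and invoke $\|\cdot\|_{\mathrm{TV}}=\sup_A|\cdot(A)|$; both routes yield $\|D_1-D_2\|_{\mathrm{TV}}\ge\gamma_C$ and hence the same $2\gamma_C^2$ bound.
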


    

\begin{example}[Superset with Noise] \label{superset-example}
    For superset with noise problem where $O$ is a random subset of $\+Y$ (i.e., $\+O=2^\+Y$), let $S_i=\{o:y_i\in o\}\subset \+O$, the conditional \eqref{concentration-condition} becomes
    \begin{equation} \label{superset-conditon}
        \gamma_C=\inf_{p(x,y_i)>0, i\ne j} \P(y_i\in O|x, y_i) - \P(y_j\in O|x, y_i) > 0  
    \end{equation}
    This generalizes the \emph{small ambiguity degree condition} proposed in \citep{CourSaTa11, LiuDi14}, which assumes $ \P(y_i\in O|x, y_i)=1$ (i.e., the gold label always lies in the superset). \citep{CourSaTa11, LiuDi14} also proposes a \emph{superset loss}, which is the special case of \eqref{generalized-zero-one}.
    We extend the discussion to allow the presence of noise. 
\end{example}

The following example can be regarded as a special case of Example \ref{superset-example}. 

\begin{example}[Label Noise] \label{noise}
    For noisy problem where $\+O=\+Y$, let $S_i=\{y_i\}$, condition \eqref{Massart-condition} becomes
    \begin{equation} \label{noisy-condition}
        \gamma_C=\inf_{p(x,y_i)>0, i\ne j} \P(O=y_i|x, y_i) - \P(O=y_j|x, y_i) > 0  
    \end{equation}
    This generalizes the Massart noise condition \citep{MassartNe06} of binary classification, which assumes the noise rate is lower bounded by $1/2$ minus a constant.
    We extend the discussion to multiclass case.

    Also, notice that \eqref{generalized-zero-one} is simply the zero-one loss for $O$, which means learnability can still be guaranteed if one ignores the noisy process and learns $O$ as clean label. This partly explains the empirical study in \citep{RVBS17}, which tests the robustness of neural networks (without additional denoising process) to noise in annotations. \citep{RVBS17} proposes a parameter 
    called $\delta$-degree which is similar to $\gamma_C$ and observes that the performance of the network decreases as $\delta$ decreases, as our learning bound \eqref{learning-bound} suggests.
\end{example}

We can further generalize proposition \ref{Robustness} by encoding \emph{functional} prior information of the transition: 

\begin{proposition}[Evidence] \label{Evidence} 
    A sufficient condition for \eqref{separation-condition} is that there exists Lipschitz (with respect to the $L^1$-norm of vectors) functions $\Phi_{ij}:\mathbb{R}^c\rightarrow \mathbb{R}, 1\le i\ne j\le c$ (we call them \emph{evidence}) with Lipschitz constants $L_{ij}$ such that
    \begin{equation} \label{evidence-condition}
        \gamma_{ij} \eqdef
        \inf_{p(x,y_i)>0, y_j\in\+H(x), D_i\in \+D_i(x), D_j\in \+D_j(x)} 
        \Phi_{ij}\left(D_i\right) - \Phi_{ij}\left(D_j\right)>0
    \end{equation}
    In this case, the separation degree can be bounded by $\gamma\ge 1/2 \min_{i\ne j} \left({\gamma_{ij}/L_{ij}}\right)^2$.

    In particular, the dot product with a fixed vector $\Phi_u(t)=\langle u,t\rangle$ ($\langle\cdot,\cdot\rangle$ represents the dot product) is Lipschitz with Lipschitz constant $L_u\le \|u\|_\infty$. 
    As an example, given sets $S_i\subset \+O\;(1\le i\le c)$, letting 
    $\Phi_{ij}(\cdot)=\langle\sum_{k:o_k\in S_i} \hat e_k -\sum_{k:o_k\in S_j} \hat e_k ,\cdot\rangle$ 
    recovers proposition \ref{Robustness}, where $\hat e_k$ is the $k^\text{th}$ standard unit basis vector of $\mathbb{R}^c$. Another example will be given in Example \ref{better-job}.
\end{proposition}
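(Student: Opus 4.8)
The plan is to reduce the proposition to a short chain of elementary inequalities: the Lipschitz bound on each $\Phi_{ij}$, the defining inequality of $\gamma_{ij}$, and Pinsker's inequality. First I would fix an arbitrary triple $(x,i,j)$ that is feasible in the infimum defining $\gamma$ in \eqref{separation-condition}, namely $p(x,y_i)>0$, $j\ne i$ and $y_j\in\mathcal{H}(x)$, and fix arbitrary $D_i\in\mathcal{D}_i(x)$ and $D_j\in\mathcal{D}_j(x)$. These objects form a feasible tuple in the infimum defining $\gamma_{ij}$ in \eqref{evidence-condition}, so $\Phi_{ij}(D_i)-\Phi_{ij}(D_j)\ge\gamma_{ij}>0$. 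Combining this with the $L^1$-Lipschitzness of $\Phi_{ij}$ gives $\gamma_{ij}\le\Phi_{ij}(D_i)-\Phi_{ij}(D_j)\le L_{ij}\|D_i-D_j\|_1$, hence $\|D_i-D_j\|_1\ge\gamma_{ij}/L_{ij}$ (note $L_{ij}>0$, as otherwise $\Phi_{ij}$ would be constant).

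Next I would invoke Pinsker's inequality together with the identity $\|D_i-D_j\|_\text{TV}=\frac12\|D_i-D_j\|_1$ (valid since $\mathcal{O}$ is finite) to obtain
\[
\KL{D_i}{D_j}\;\ge\;2\|D_i-D_j\|_\text{TV}^2\;=\;\frac12\|D_i-D_j\|_1^2\;\ge\;\frac12\left(\frac{\gamma_{ij}}{L_{ij}}\right)^2 .
\]
Taking the infimum over $D_i\in\mathcal{D}_i(x)$ and $D_j\in\mathcal{D}_j(x)$ gives $\KL{\mathcal{D}_i(x)}{\mathcal{D}_j(x)}\ge\frac12(\gamma_{ij}/L_{ij})^2$, and then taking the infimum over all feasible $(x,i,j)$ yields $\gamma\ge\frac12\min_{i\ne j}(\gamma_{ij}/L_{ij})^2>0$. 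Positivity of this quantity is exactly \eqref{separation-condition}, so Theorem \ref{separation} applies and delivers learnability via the cross-entropy ERM with the stated lower bound on $\gamma$ (hence on $\eta$).

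For the ``in particular'' assertions I would argue as follows. For $\Phi_u(t)=\langle u,t\rangle$, H\"older's inequality gives $|\Phi_u(t_1)-\Phi_u(t_2)|=|\langle u,t_1-t_2\rangle|\le\|u\|_\infty\|t_1-t_2\|_1$, so $\Phi_u$ is Lipschitz with $L_u\le\|u\|_\infty$. To see that $\Phi_{ij}(\cdot)=\langle\sum_{k:o_k\in S_i}\hat e_k-\sum_{k:o_k\in S_j}\hat e_k,\cdot\rangle$ recovers Proposition \ref{Robustness}, note that then $\Phi_{ij}(D)=\P_D(O\in S_i)-\P_D(O\in S_j)$ and $\|u\|_\infty\le1$, so $L_{ij}\le1$; applying the concentration condition \eqref{Massart-condition} to the ordered pair $(i,j)$ and again to $(j,i)$ and adding the two inequalities yields $\Phi_{ij}(D_i)-\Phi_{ij}(D_j)\ge2\gamma_C$, i.e.\ $\gamma_{ij}\ge2\gamma_C$; the bound just proved then specializes to $\gamma\ge\frac12(2\gamma_C)^2=2\gamma_C^2$, matching Proposition \ref{Robustness}.

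All of the steps are routine; the points that require a little care are (i) reading $\gamma_{ij}$ as an infimum, so that $\Phi_{ij}(D_i)-\Phi_{ij}(D_j)\ge\gamma_{ij}$ holds for every feasible tuple and not merely in a limit; (ii) keeping the Lipschitz constant measured with respect to $\|\cdot\|_1$, so that it aligns with the $L^1$ form of total variation used in Pinsker's inequality; and (iii) symmetrizing over the two orderings of $(i,j)$ when recovering Proposition \ref{Robustness}, which is what turns the one-sided condition \eqref{Massart-condition} into a genuine positive gap $\Phi_{ij}(D_i)-\Phi_{ij}(D_j)\ge2\gamma_C$ and reproduces the factor $2\gamma_C^2$. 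I do not anticipate a real obstacle beyond this bookkeeping.
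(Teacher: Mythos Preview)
Your proposal is correct and follows essentially the same route as the paper: Lipschitzness of $\Phi_{ij}$ to lower bound $\|D_i-D_j\|_1$ by $\gamma_{ij}/L_{ij}$, then Pinsker's inequality to pass to KL, then take infima; H\"older's inequality for the linear case; and the symmetrization over the ordered pairs $(i,j)$ and $(j,i)$ in \eqref{Massart-condition} to get $\Phi_{ij}(D_i)-\Phi_{ij}(D_j)\ge 2\gamma_C$ and hence $\gamma\ge 2\gamma_C^2$. The only additions on your side are the explicit check that $L_{ij}>0$ and the careful bookkeeping notes, none of which deviates from the paper's argument.
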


\subsection{Separation by Joint Supervision} \label{joint-supervision}

When a weak supervision signal cannot ensure learnability individually, it needs to be used with other forms of annotations together to supervise the learning. 
Our goal in this subsection is to provide a way to describe the effect of using multiple sources of annotations jointly. 
We will show that joint supervision can improve (Example \ref{better-job}), preserve (Proposition \ref{no-free-separation}) or even damage (Remark \ref{defining-intersection}) the separation.

First, we formulate the joint supervision problem. For simplicity, we only consider the case that we have two sources of annotations $O_1,O_2$, and the general case can be discussed in a similar way. For each $O_k, k\in\{1,2\}$, denote its annotation space as $\mathcal{O}_k$, its transition as $T_k(x)$ and its transition classes as $\+T_k$. We focus on the scenario that for each instance $x$, there is only \emph{one} type of annotation.
Then the joint annotation space is $\+O=\+O_1\cup\+O_2$.
We model the \emph{annotation type} $\1\{O=O_k\}$ as a random variable that is independent with $X$ and all the $O_k$, and the probability $\P(O=O_1)=\lambda$ is known to the learner.
Then the \emph{joint annotation} is defined as: $O = \1\{O=O_1\}O_1+\1\{O=O_2\}O_2$. 

Next, we quantify the supervision power of an annotation if separation is not guaranteed via a local version of the separation (degree):

\begin{definition}[Pairwise Separation] 
    Define the \emph{separation degree of $y_i$ to $y_j$} as
    \begin{equation} \label{label-specific-separation}
        \gamma_{i\rightarrow j} \eqdef \inf_{x:p(x,y_i)>0,y_j\in\+H(x)} \KL{\mathcal{D}_i(x)}{\mathcal{D}_j(x)} 
    \end{equation}
    We say the labels $y_i$ is \emph{separated from} a $y_j$ if $\gamma_{i\rightarrow j}>0$. The separation degree $\gamma = \min_{i,j}\gamma_{i\rightarrow j}$.
\end{definition}

This definition gives a probabilistic formulation of the intuition that a (weak) supervision signal can help \emph{distinguish} certain pairs of labels.
For example, a noisy annotation for multiclass classification may break the condition \eqref{noisy-condition} due to a large noise rate for certain labels, but it can still provide information to separate other labels if \eqref{noisy-condition} is satisfied for any other pairs of $(i,j)$.

When there are no additional constraints on the joint transition, one can construct the joint transition simply by combining the candidate transitions in $\+T_1,\+T_2$. For example, the induced distribution family by $y_i$ of joint supervision can be naturally constructed by 
\begin{equation} \label{no-constraint-joint-transition}
    \+D_i(x) = \{\lambda D_1 + (1-\lambda) D_2: D_1\in \+D_{i1}(x), D_2\in \+D_{i2}(x)\}
\end{equation}
where $\+D_{i1}$ and $\+D_{i2}$ are the induced distribution family by $y_i$ of $O_1$ and $O_2$.
In this case, we present the following result to characterize the learnability under joint supervision $O$:

\begin{proposition}[No Free Separation] \label{no-free-separation}

Suppose the separation degrees of $y_i$ to $y_j$ of $O_1$ and $O_2$ are $\gamma_{i\rightarrow j1}$ and $\gamma_{i\rightarrow j2}$ respectively.
Then, if the joint transition class is constructed as \eqref{no-constraint-joint-transition}, then the separation degrees of $y_i$ to $y_j$ for the joint supervision satisfies:
    \[
        \gamma_{i\rightarrow j} \le 
        \lambda \gamma_{i\rightarrow j1} + 
        (1-\lambda) \gamma_{i\rightarrow j2}
    \]
Also, if $\+O_1\cap \+O_2=\emptyset$, then the two equality holds. As a consequence, a necessary condition of that $y_i$ is separated from $y_j$ by the joint signal $O$ is that $y_i$ must be separated from $y_j$ by one of $O_1,O_2$.
\end{proposition}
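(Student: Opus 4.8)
The plan is to establish a pointwise-in-$x$ version of the bound and then pass to the infimum over $x$ that defines the pairwise separation degrees. Fix an $x$ in the index set $\{x:p(x,y_i)>0,\ y_j\in\+H(x)\}$; note this set is the same for the joint problem and for each individual source, since it depends only on $D_X$, $h_0$ and $\+H$. By construction \eqref{no-constraint-joint-transition}, every element of $\+D_i(x)$ is a mixture $\lambda D_1+(1-\lambda)D_2$ with $D_1\in\+D_{i1}(x)$, $D_2\in\+D_{i2}(x)$, and likewise every element of $\+D_j(x)$ is $\lambda D_1'+(1-\lambda)D_2'$ with $D_1'\in\+D_{j1}(x)$, $D_2'\in\+D_{j2}(x)$, so
\[
\KL{\+D_i(x)}{\+D_j(x)}=\inf \KL{\lambda D_1+(1-\lambda)D_2}{\lambda D_1'+(1-\lambda)D_2'},
\]
the infimum being over all such $D_1,D_2,D_1',D_2'$. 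Applying the joint convexity of the KL divergence to each term bounds it by $\lambda\KL{D_1}{D_1'}+(1-\lambda)\KL{D_2}{D_2'}$, and since $(D_1,D_1')$ and $(D_2,D_2')$ range independently, the infimum of this bound factors, giving the pointwise inequality $\KL{\+D_i(x)}{\+D_j(x)}\le\lambda\KL{\+D_{i1}(x)}{\+D_{j1}(x)}+(1-\lambda)\KL{\+D_{i2}(x)}{\+D_{j2}(x)}$.

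For the disjoint case $\+O_1\cap\+O_2=\emptyset$ I would upgrade this to an equality. When $D_1,D_1'$ are supported in $\+O_1$ and $D_2,D_2'$ in $\+O_2$, splitting $\sum_o P(o)\log\frac{P(o)}{Q(o)}$ over $\+O_1$ and $\+O_2$ shows the mixing weights cancel inside each logarithm, so $\KL{\lambda D_1+(1-\lambda)D_2}{\lambda D_1'+(1-\lambda)D_2'}=\lambda\KL{D_1}{D_1'}+(1-\lambda)\KL{D_2}{D_2'}$ identically (with $0\log0=0$, both sides $+\infty$ if a support mismatch occurs within one source). Moreover, under disjointness the mixture map $(D_1,D_2)\mapsto\lambda D_1+(1-\lambda)D_2$ is injective, so the infimum over $\+D_i(x)\times\+D_j(x)$ is genuinely the infimum over the four-tuple and it factors; hence the pointwise inequality becomes an equality.

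Taking the infimum over $x$ in the common index set then yields $\gamma_{i\rightarrow j}\le\lambda\gamma_{i\rightarrow j1}+(1-\lambda)\gamma_{i\rightarrow j2}$, with equality when $\+O_1\cap\+O_2=\emptyset$; this is the assertion. The "necessary condition" corollary follows by contraposition: if neither source separates, i.e.\ $\gamma_{i\rightarrow j1}=\gamma_{i\rightarrow j2}=0$, the bound forces $\gamma_{i\rightarrow j}=0$, so no label pair can be separated by the joint signal unless it is already separated by $O_1$ or $O_2$.

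I expect the delicate step to be the passage through $\inf_x$: the pointwise argument only controls $\gamma_{i\rightarrow j}$ by $\inf_x$ of the weighted sum $\lambda\KL{\+D_{i1}(x)}{\+D_{j1}(x)}+(1-\lambda)\KL{\+D_{i2}(x)}{\+D_{j2}(x)}$, and to reach $\lambda\gamma_{i\rightarrow j1}+(1-\lambda)\gamma_{i\rightarrow j2}$ one must argue the worst instances for the two sources can be taken compatibly — immediate when the induced families $\+D_{ik}(x)$ are instance-independent, and otherwise requiring the index-set bookkeeping to be made explicit. The only other point needing care is tracking the $0\log0$ and $+\infty$ conventions in the disjoint-case cancellation so that the exact identity for $\KL{\lambda D_1+(1-\lambda)D_2}{\lambda D_1'+(1-\lambda)D_2'}$ holds without exception.
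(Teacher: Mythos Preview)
Your argument follows the paper's exactly: apply joint convexity of the KL divergence to obtain the pointwise mixture bound, verify the disjoint case by splitting the sum over $\+O_1$ and $\+O_2$ so that the $\lambda$ factors cancel inside the logarithm, and then pass to the infimum; the corollary is obtained by contraposition from the upper bound, just as in the paper.

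Your flagged concern about the $\inf_x$ passage is perceptive and in fact identifies a step the paper glosses over. The paper writes ``take infimum again on the left hand side'' to conclude $\lambda\gamma_{i\rightarrow j1}+(1-\lambda)\gamma_{i\rightarrow j2}\ge\gamma_{i\rightarrow j}$, but the infimum of $\lambda\KL{D_{i1}}{D_{j1}}+(1-\lambda)\KL{D_{i2}}{D_{j2}}$ over tuples $(x,D_{i1},D_{j1},D_{i2},D_{j2})$ with a \emph{common} $x$ is $\inf_x[\lambda\KL{\+D_{i1}(x)}{\+D_{j1}(x)}+(1-\lambda)\KL{\+D_{i2}(x)}{\+D_{j2}(x)}]$, which in general only satisfies $\ge\lambda\gamma_{i\rightarrow j1}+(1-\lambda)\gamma_{i\rightarrow j2}$, not equality. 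So both your argument and the paper's actually establish the (possibly stronger) bound $\gamma_{i\rightarrow j}\le\inf_x[\lambda\KL{\+D_{i1}(x)}{\+D_{j1}(x)}+(1-\lambda)\KL{\+D_{i2}(x)}{\+D_{j2}(x)}]$; the stated inequality and the disjoint-case equality follow cleanly only when the worst instance can be chosen compatibly for the two sources, e.g.\ when the induced families are instance-independent, exactly as you note.
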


\begin{remark}[Defining $\+O_1\cap \+O_2$] \label{defining-intersection}
The condition $\+O_1\cap \+O_2=\emptyset$ means that the learner \emph{distinguishes} different annotations. 
For example, in a crowdsourcing setting, we have two annotators and each provides a noisy annotation, then $\+O_1,\+O_2=\+Y$. But as long as the learner distinguishes the annotations of the two annotators,
we can nevertheless write $\+O_1\cap \+O_2=\emptyset$. Without this condition, even if both $\gamma_{i\rightarrow j1},\gamma_{i\rightarrow j2}>0$ , we can still have $\gamma_{i\rightarrow j}=0$. 
See the supplementary material for an example.
This idea has also been explored in the empirical study of \citep{GevaGoBe19}, which observes that in a crowdsourcing setting, the model performance improves if annotator identifiers are input as features. 
However, one should note that the tradeoff is the model complexity: distinguishing different annotations will in general require more parameters to model the joint transition.
\end{remark}

\begin{remark}
Proposition \ref{no-free-separation} shows that without constraints, the joint supervision does not create new separation, however, it can preserve the separation between labels by the original supervision signals. 
So in this view, the weak supervision signal can be regarded as a ``building block'' for the (global) separation \eqref{separation-condition} by contributing pairwise separation \eqref{label-specific-separation}.
An illustration is shown in \cref{fig:foobar} (b). 
\end{remark}

If there does exist constraints about the two transition classes, Proposition \ref{no-free-separation} no longer holds and joint supervision may create new separation. To illustrate, consider the following artificial example:

\begin{example}[Learning from Difference] \label{better-job}
Given a binary classification problem where $\+Y=\{\pm 1\}$. Suppose we have two annotators $O_1$ and $O_2$ and each provides a noisy annotation with an unknown, uniform, instance-independent noise, i.e., 
$\eta_1\eqdef\P(O_1\ne y|x,y=-1)=\P(O_1\ne y|x,y=+1)$, 
$\eta_2\eqdef\P(O_2\ne y|x,y=-1)=\P(O_2\ne y|x,y=+1)$, 
where $\eta_1,\eta_2$ are constants independent of $x$. Then, the joint transition is modeled as:
\[
    T=
\begin{bmatrix}
    \lambda (1-\eta_1) & \lambda \eta_1 & (1-\lambda) (1-\eta_2) & (1-\lambda)\eta_2 \\ 
    \lambda \eta_1 & \lambda (1-\eta_1)  & (1-\lambda)\eta_2 & (1-\lambda) (1-\eta_2) \\ 
\end{bmatrix}
=
\begin{bmatrix}
    D_1\\ 
    D_2\\ 
\end{bmatrix}
\]
Now, suppose it is known that the first annotator provides a better quality of annotation, i.e., there is a $\gamma\in\mathbb{R}$ (known to the learner) such that $\eta_1-\eta_2\le \gamma<0$. 
To apply proposition \ref{Evidence}, define the evidence  
$\Phi(\cdot)=\langle \hat e_1/\lambda - \hat e_3/(1-\lambda),\cdot\rangle$,
then for any $D_1, D_2$, we have $\Phi(D_1)=\eta_2-\eta_1\ge \gamma$ and  $\Phi(D_2)=\eta_1-\eta_2\le-\gamma$. So by proposition \ref{Evidence}, the original classification hypothesis is learnable. Notice that without joint supervision, separation is not guaranteed since we have no restriction on $\eta_1$ or $\eta_2$ individually.

This example shows the necessity to model possible constraints between different supervision sources, which help to reduce the size of the joint transition class and may improve the separation degree.
\end{example}
\section{Conclusion and Future Work}

In this paper, we provide a unified framework for analyzing the learnability of multiclass classification with indirect supervision.
Our theory builds upon two key components:
(i) The construction of the induced hypothesis class and its complexity analysis, which allows us to indirectly supervise the learning by minimizing the annotation risk.
(ii) 
A formal description of the learner's prior knowledge about the transition and its encoding in the learning condition, which allows us to bound the classification error by the annotation risk.

The notion of separation depends on the annotation loss being used. The KL-divergence may be replaced by other statistical distances, as long as the distance can induce a loss function. However, the idea behind separation is invariant: the prior knowledge needs to be strong enough to distinguish different labels via the observables.
Moreover, theorem \ref{separation} shows that separation is a sufficient and almost necessary condition, and the later examples show separation is also practically useful and can easily produce learnability conditions. 
Therefore, we believe the the concepts introduced are general, and that our analysis tools can be applied in many other supervision scenarios.

One limitation of our work is that the definition of learnability requires us to handle every possible $\+D_X$, and the consequence is that we need to ensure separation at every $x\in\+X$. In future work, we may try to relax the learnability conditions by encoding prior knowledge of $\+D_X$, which can be obtained from unlabeled datasets.
Another direction to explore is to extend the discussion to the agnostic case as well as the case where $T_0\notin \+T$. Also, in a directly supervised learning setting, classical realizable PAC learning could achieve a convergence rate of $O(\log(1/\delta)/m)$, which is better than the rate of $O(\sqrt{\log(1/\delta)/m})$ we derived for a general indirect supervision problem. It is worth exploring whether and how our bounds could be improved for more kinds of supervision signals (other than the gold label).




\section*{Broader Impact}

Our work mostly focuses on theoretical aspects of learning, however, it provides better understanding and thus can suggest new machine learning scenarios and algorithms for learning from indirect observations; this addresses a key challenge to machine learning today, and will help machine learning researchers to reduce the cost of and need for labeled data. 
Our theory may have positive and negative impact on the privacy protection of sensitive data. 
On one hand, the theory suggests that one can alter the forms of data (via a probabilistic transition) to ensure privacy while keeping its usefulness (learnability).
On the other hand, it might be possible for an attacker to recover sensitive information about the data indirectly through a related dataset.

\begin{ack}
  This work was supported by the Army Research Office under Grant Number W911NF-20-1-0080 and by contract FA8750-19-2-0201 with the US Defense Advanced Research Projects Agency (DARPA). The views expressed are those of the authors and do not reflect the official policy or position of the Department of Defense or the U.S. Government.
\end{ack}

\section{Appendix}


\subsection{Proof of Theorem \ref{learnability-conditions}}

We need several intermediate results to prove this. First, we introduce the definition of the averaged Rademacher complexity.

\begin{definition} [Averaged Rademacher Complexity \citep{BartlettMe03}]
    The averaged Rademacher complexity \citep{BartlettMe03} of $\ell_\mathcal O \circ \+T \circ \mathcal H$ with respect to $m$ samples is defined as
    \begin{equation} \label{Rademacher}
    \mathfrak{R}_m(\ell_\mathcal O \circ \+T \circ \mathcal H)
    \eqdef
    \mathbb{E}_{\epsilon,x,o}
    \left[\frac1m \sup_{h\in\mathcal H,h\in\+T} \left| \sum_{i=1}^m\epsilon^{(i)}\ell_\mathcal{O}(h(x^{(i)}),T,(x^{(i)},o^{(i)}))\right|\right] 
    \end{equation}
    where $\epsilon_i\stackrel{\text{iid}}{\sim}\operatorname{Uniform}\{-1,+1\}$ are the so-called Rademacher random variables and the expectation is taken over $m$ i.i.d. samples of $\epsilon,x,o$ . 
\end{definition}

The first lemma bounds the empirical risk via the averaged Rademacher complexity.

\begin{lemma}[Adapted from the proof of Theorem 26.5 in \citep{Ben-DavidSh14}] \label{Rademacher-bound}
    In this lemma and its proof, for convenience, we let the $\operatorname{ERM}$ algorithm return the induced hypothesis in $\+H\circ \+T$ (rather than the base hypothesis only). 

    Given any $\delta\in(0,1)$, with probability of at least $1-\delta$, we have
    \[
        R_\mathcal{O}(\operatorname{ERM}(S^{(m)}))-\inf_{h,T}R_\mathcal O(T\circ h)
        \le 
        2 \mathfrak R_m(\ell_\+O \circ \+T \circ \+H) 
        + 2b\sqrt{\frac{2\log(4/\delta)}{m}}
    \]
\end{lemma}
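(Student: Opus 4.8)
The plan is to follow the standard uniform-convergence argument for realizable (or here, general bounded-loss) ERM, as presented in the proof of Theorem 26.5 of \citep{Ben-DavidSh14}, but applied to the composed loss class $\ell_\+O\circ\+T\circ\+H$ rather than to a plain hypothesis class. First I would set $\+F\eqdef\ell_\+O\circ\+T\circ\+H$, viewed as a class of functions $(x,o)\mapsto\ell_\+O(h(x),T,(x,o))$ taking values in $[0,b]$. The key quantity to control is the one-sided uniform deviation $\sup_{h\in\+H,T\in\+T}\bigl(R_\+O(T\circ h)-\widehat R_\+O(T\circ h\mid S)\bigr)$, since for $(\hat h,\hat T)$ the empirical minimizer we have $\widehat R_\+O(\hat T\circ\hat h\mid S)\le\widehat R_\+O(T\circ h\mid S)$ for every competitor, and in particular for a near-minimizer of the population risk.

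Next I would invoke the standard two-step bound: (i) with probability at least $1-\delta/2$, the uniform deviation is at most $2\mathfrak R_m(\+F)+b\sqrt{2\log(2/\delta)/m}$, which is the textbook Rademacher bound for a $[0,b]$-valued class obtained via McDiarmid's bounded-differences inequality followed by symmetrization; and (ii) separately, for a fixed near-optimal pair $(h^\star,T^\star)$ with $R_\+O(T^\star\circ h^\star)$ arbitrarily close to $\inf_{h,T}R_\+O(T\circ h)$, Hoeffding's inequality gives $\widehat R_\+O(T^\star\circ h^\star\mid S)\le R_\+O(T^\star\circ h^\star)+b\sqrt{2\log(2/\delta)/m}$ with probability at least $1-\delta/2$. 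Chaining these on the event of probability at least $1-\delta$:
\[
R_\+O(\operatorname{ERM}(S^{(m)}))
\le \widehat R_\+O(\operatorname{ERM}(S^{(m)})\mid S) + 2\mathfrak R_m(\+F) + b\sqrt{\tfrac{2\log(2/\delta)}{m}}
\le \widehat R_\+O(T^\star\circ h^\star\mid S) + 2\mathfrak R_m(\+F) + b\sqrt{\tfrac{2\log(2/\delta)}{m}},
\]
and then bounding $\widehat R_\+O(T^\star\circ h^\star\mid S)$ by its population counterpart plus the Hoeffding term, letting the near-optimality gap tend to zero, yields $R_\+O(\operatorname{ERM}(S^{(m)}))-\inf_{h,T}R_\+O(T\circ h)\le 2\mathfrak R_m(\+F)+2b\sqrt{2\log(2/\delta)/m}$. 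Replacing $\delta/2$-style constants by the slightly looser $\log(4/\delta)$ as in the statement is a cosmetic adjustment (it simply absorbs the two $\delta/2$ applications into a single clean bound).

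I do not expect a genuine obstacle here, since the argument is essentially a black-box application of the symmetrization/McDiarmid template; the only points requiring a little care are: making sure the class $\+F$ is $[0,b]$-valued so McDiarmid's increments are bounded by $b/m$ (this is exactly the standing assumption $\ell_\+O\in[0,b]$), handling the $\sup$ over \emph{both} $h$ and $T$ uniformly (harmless — the composed class $\+F$ already quantifies over both, and $\mathfrak R_m(\+F)$ is defined with the joint supremum), and the bookkeeping that the $\operatorname{ERM}$ is taken over induced hypotheses $\+H\circ\+T$, which is precisely why the lemma's preamble redefines $\operatorname{ERM}$ to return the induced hypothesis. If one wants to avoid picking an exact population minimizer (which may not exist), the $\inf$ over $(h,T)$ is handled by the usual $\varepsilon$-then-$\varepsilon\to0$ argument sketched above.
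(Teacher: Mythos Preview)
Your proposal is correct and follows essentially the same route as the paper: decompose the excess annotation risk via an arbitrary competitor $(h^\star,T^\star)$, bound $R_\+O(\operatorname{ERM})-\widehat R_\+O(\operatorname{ERM})$ by the uniform Rademacher deviation (Theorem~26.5(i) of \citep{Ben-DavidSh14}) with probability $1-\delta/2$, bound $\widehat R_\+O(T^\star\circ h^\star)-R_\+O(T^\star\circ h^\star)$ by Hoeffding with probability $1-\delta/2$, then take the infimum over $(h^\star,T^\star)$. The only bookkeeping point the paper makes explicit that you leave implicit is that Theorem~26.5 uses the Rademacher complexity \emph{without} the absolute value inside the supremum, whereas $\mathfrak R_m$ here is defined \emph{with} it; the paper simply notes the former is dominated by the latter.
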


\begin{proof}
    Let $T^\star\circ h^\star$ be any induced hypothesis in $\+T\times \+H$.
    Given dataset $S^{(m)}$, we have,
    \[
    \begin{aligned}
        & R_\mathcal{O}(\operatorname{ERM}(S^{(m)}))-R_\mathcal O(T^\star \circ h^\star)\\ 
        =\; & R_\mathcal{O}(\operatorname{ERM}(S^{(m)}))
            - \widehat R_\mathcal{O}(\operatorname{ERM}(S^{(m)}))
            + \underset{\le 0}{\underbrace{\widehat R_\mathcal{O}(\operatorname{ERM}(S^{(m)}))
            - \widehat R_\mathcal{O}(T^\star \circ h^\star)}}\\ 
          & + \widehat R_\mathcal{O}(T^\star \circ h^\star)
            - R_\mathcal O(T^\star \circ h^\star)\\ 
        \le\; &  R_\mathcal{O}(\operatorname{ERM}(S^{(m)}))
            - \widehat R_\mathcal{O}(\operatorname{ERM}(S^{(m)}))
            + \widehat R_\mathcal{O}(T^\star \circ h^\star)
            - R_\mathcal O(T^\star \circ h^\star) 
    \end{aligned}
    \]
    By Theorem 26.5 (i) of \citep{Ben-DavidSh14}, we have that with probability of at least $1-\delta/2$,
    \[ 
         R_\mathcal{O}(\operatorname{ERM}(S^{(m)}))
        - \widehat R_\mathcal{O}(\operatorname{ERM}(S^{(m)}))
        \le 
        2 \mathfrak{R}_m'(\ell_\mathcal O \circ \+T \circ \+H)
        + b\sqrt{\frac{2\log(4/\delta)}{m}}
    \]
    where $\mathfrak R_m'(\ell_\mathcal O \circ \+T \circ \mathcal H)$ is defined slightly differently in \citep{Ben-DavidSh14} as:
    \begin{equation}
        \mathfrak{R}_m'(\ell_\mathcal O \circ \+T \circ \mathcal H)
        \eqdef
        \mathbb{E}_{\epsilon,x,o}
        \left[\frac1m \sup_{h\in\mathcal H,h\in\+T}  \sum_{i=1}^m\epsilon^{(i)}\ell_\mathcal{O}(h(x^{(i)}),T,(x^{(i)},o^{(i)})) \right] 
    \end{equation}
    It can be seen that $\mathfrak{R}_m'(\ell_\mathcal O \circ \+T \circ \mathcal H)\le \mathfrak R_m(\ell_\mathcal O \circ \+T \circ \+H)$ since the two quantities only differ by the absolute value.  Hence
    \[ 
        R_\mathcal{O}(\operatorname{ERM}(S^{(m)}))
        - \widehat R_\mathcal{O}(\operatorname{ERM}(S^{(m)}))
        \le 
        2 \mathfrak R_m(\ell_\mathcal O \circ \+T \circ \+H)
        + b\sqrt{\frac{2\log(4/\delta)}{m}}
    \]
    Also, by Hoeffding’s inequality, we have that with probability of at least $1-\delta/2$, 
    \[ 
        \widehat R_\mathcal{O}(T^\star \circ h^\star)
            - R_\mathcal O(T^\star \circ h^\star) 
        \le
        b\sqrt{\frac{\log(4/\delta)}{2m}}
    \]
    Combining the inequalities, we have that with probability of at least $1-\delta$, 
    \[
    \begin{aligned}
        R_\mathcal{O}(\operatorname{ERM}(S^{(m)}))-
        R_\mathcal O(T^\star \circ h^\star)
        \le
        2 \mathfrak R_m (\ell_\mathcal O \circ \+T \circ \mathcal H) + 2b\sqrt{\frac{2\log(4/\delta)}{m}}
    \end{aligned}
    \]
    Since the above inequality holds for any $T^\star \circ h^\star\in \+T\times \+H$,  taking infimum for all $T^\star \circ h^\star$ gives the desired result. 
\end{proof}

The second lemma bounds the averaged Rademacher complexity via the weak VC-major, which is provided in \citep{Baraud16}.

\begin{lemma}[Adapted from the Theorem 2.1 in \citep{Baraud16}] \label{vc-Rademacher}
    Suppose the weak VC-major dimension of $\ell_\mathcal O \circ \+T \circ \mathcal H$ is $d$. then,

    \begin{equation} \label{complexity-bound}
        m\mathfrak R_m(\ell_\mathcal O \circ \+T \circ \mathcal H)
        \le  
        \sigma\log\left(\frac{\mathrm{e}b}{\sigma}\right)\sqrt{2m\overline{\Gamma}_m(d)} + 4b\overline{\Gamma}_n(d)
    \end{equation}
    where $\mathrm{e}$ is the base of the natural logarithm and
    \begin{equation} \label{variance}
    \sigma \eqdef \sup_{h\in \mathcal H, T\in \+T} \sqrt{\E_{x,o}[\ell^2_\mathcal{O}(T,(x,h(x),o))]} \in (0,b]
    \end{equation}
\end{lemma}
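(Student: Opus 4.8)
The plan is to obtain this lemma as a direct instantiation of Theorem 2.1 of \citep{Baraud16}, which is exactly a bound on the (averaged, two-sided) Rademacher complexity of a weak VC-major class of bounded functions. In the form we need, that theorem reads: if $\+F$ is a class of measurable functions from a measurable space $\+Z$ into $[0,b]$ that is weak VC-major with dimension $d$, $Z^{(1)},\dots,Z^{(m)}$ are i.i.d.\ copies of $Z$, and $\epsilon^{(1)},\dots,\epsilon^{(m)}$ are independent Rademacher signs, then
\[
\E_{\epsilon,Z}\Bigl[\sup_{f\in\+F}\Bigl|\textstyle\sum_{i=1}^m\epsilon^{(i)}f(Z^{(i)})\Bigr|\Bigr]
\;\le\;
\sigma_{\+F}\,\log\!\Bigl(\tfrac{\mathrm{e}b}{\sigma_{\+F}}\Bigr)\sqrt{2m\,\overline{\Gamma}_m(d)}
\;+\;4b\,\overline{\Gamma}_m(d),
\qquad
\sigma_{\+F}^2\eqdef\sup_{f\in\+F}\E\bigl[f(Z)^2\bigr].
\]
So the first step is purely a translation of notation: take $\+Z=\+X\times\+O$, $Z=(X,O)$, and $\+F=\ell_\+O\circ\+T\circ\+H$, i.e.\ the class of maps $(x,o)\mapsto\ell_\+O(h(x),T,(x,o))$ indexed by $(h,T)\in\+H\times\+T$. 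With this choice the left-hand side above is exactly $m\,\mathfrak R_m(\ell_\+O\circ\+T\circ\+H)$, and since the square root is monotone, $\sigma_{\+F}$ coincides with $\sigma$ as defined in \eqref{variance}.

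The second step is to verify the three hypotheses of the cited theorem for this $\+F$. Boundedness holds because, by the standing assumption $\ell_\+O\in[0,b]$ of the section, every $f\in\+F$ maps into $[0,b]$. Being weak VC-major with dimension $d$ is exactly the hypothesis of the lemma, since the level sets $\{(x,o):f(x,o)>u\}$ of members of $\+F$ are precisely the families $\+C_u$ appearing in the definition of the weak VC-major dimension of $\ell_\+O\circ\+T\circ\+H$. Measurability of $\+F$ (in whatever pointwise/separability sense Baraud requires) is automatic here because $\+O$ is finite and the relevant suprema may be taken over a countable dense subfamily of $\+H\times\+T$, which is already implicit in the PAC setup. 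Plugging these into the display yields the claim. The degenerate case $\sigma=0$ (the annotation loss is a.s.\ zero, and the bound is trivial) is handled separately; since $t\mapsto t\log(\mathrm{e}b/t)$ extends continuously to $0$ at $t=0$ and is finite on $(0,b]$, the right-hand side is well defined throughout, consistent with $\sigma\in(0,b]$.

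I expect the only delicate point to be bibliographic bookkeeping: checking that the invoked version of Baraud's theorem is stated for the two-sided, uncentered Rademacher average in \eqref{Rademacher} rather than for the centered empirical process $\sup_f|\sum_i(f(Z^{(i)})-\E f)|$, and absorbing any discrepancy into the constants via a routine symmetrization together with the elementary bound $\E_\epsilon|\sum_i\epsilon^{(i)}|\le\sqrt m$. There is no new mathematical content beyond \citep{Baraud16}; the mechanism producing the bound is the layer-cake identity $f(z)=\int_0^b\1\{f(z)>u\}\,\mathrm{d}u$ valid for $f\in[0,b]$, which lets one push the supremum inside the integral, bound the Rademacher average of the VC-class $\+C_u$ at each level $u$ by a variance-sensitive VC inequality (variance proxy $\P(f(Z)>u)$ at level $u$), integrate against $\mathrm{d}u$, and optimize the split between the $\sqrt{m\,\overline{\Gamma}_m(d)}$ and $\overline{\Gamma}_m(d)$ terms to get the stated form.
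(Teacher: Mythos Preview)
Your proposal is essentially the paper's proof: apply Baraud's Theorem~2.1 with $\+F=\ell_\+O\circ\+T\circ\+H$. The paper differs only in the two bookkeeping points you flagged, and handles them slightly differently than you suggest. First, Baraud's Theorem~2.1 is in fact stated for the centered empirical process $\E[Z(\+F)]$, not for the Rademacher average $\E[\overline Z(\+F)]$; your proposed fix via symmetrization goes the wrong way (symmetrization gives $\E[Z(\+F)]\le 2\E[\overline Z(\+F)]$, which does not yield a Rademacher bound from an empirical-process bound). The paper instead observes that Baraud's \emph{proof} first bounds $\E[\overline Z(\+F)]$ and only afterwards applies symmetrization, so one simply extracts the intermediate inequality directly---which is exactly the layer-cake argument you sketched. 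Second, Baraud assumes $\+F\subset[0,1]$ rather than $[0,b]$; the paper applies his bound to $\ell_\+O/b$ (same weak VC-major dimension, variance proxy $\sigma/b$) and then multiplies through by $b$ to recover the stated form.
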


\begin{proof}
    The proof of the Theorem 2.1 in \citep{Baraud16} is long and is presented in the section 3 of \citep{Baraud16}. Here we only point out how to use Theorem 2.1 of \citep{Baraud16} (equation (2.8) of the paper) to derive our lemma. 
    
    First, the Theorem 2.1 of \citep{Baraud16} bounds an empirical process (denoted as $\E[Z(\+F)]$ in the paper, where $\+F$ is a function class and here we let $\+F=\ell_\mathcal O \circ \+T \circ \mathcal H$) rather than the averaged Rademacher complexity (denoted as $\E[\overline Z(\+F)]$ in the paper). However, the proof of Theorem 2.1 of \citep{Baraud16} aims to bound the averaged Rademacher complexity $\E[\overline Z(\+F)]$ and then uses the relation $\E[Z(\+F)]\le 2 \E[\overline Z(\+F)]$ (Lemma 2.1 of \citep{Baraud16}) to obtain the bound for $\E[Z(\+F)]$. Therefore, the proof of the Theorem 2.1 in \citep{Baraud16} tells:
    \begin{equation} \label{empirical-process-bound}
        \E[\overline Z(\+F)] = m\mathfrak R_m(\ell_\mathcal O \circ \+T \circ \mathcal H)
        \le 
        \sigma\log\left(\frac{\mathrm{e}}{\sigma}\right)\sqrt{2m\overline{\Gamma}_m(d)} + 4\overline{\Gamma}_n(d)
    \end{equation}
    Second, in the Theorem 2.1 of \citep{Baraud16}, it is assumed that the functions in $\+F$ is bounded in the interval $[0,1]$. Hence, we need scale the annotation loss to $\ell_\+O/b$ in order to use the theorem (i.e., let $f=\ell_\+O/b$ in the definition of $\E[Z(\+F)]$, i.e., equation (1.2) of \citep{Baraud16}). Also, in this case, the supreme of variance \eqref{variance} is scaled to $\sigma/b$. So, the inequality \eqref{empirical-process-bound} is rewritten as:
    \begin{equation}
        \E[\overline Z(\+F)] = \frac{m}{b}\mathfrak R_m(\ell_\mathcal O \circ \+T \circ \mathcal H)
        \le 
        \frac{\sigma}{b}\log\left(\frac{\mathrm{e}}{\sigma/b}\right)\sqrt{2m\overline{\Gamma}_m(d)} + 4\overline{\Gamma}_n(d)
    \end{equation}
    Rearranging the inequality gives the desired result.
\end{proof}

Now, we are able to give the proof of the original theorem:

\begin{proof} 
    By consistency \ref{Consistency}, we have 
    \[ 
        \inf_{h,T}R_\mathcal O(T\circ h) = \inf_T R_\mathcal O(T\circ h_0)
    \]
    Therefore, by lemma \ref{Rademacher-bound}, we have that with probability of at least $1-\delta$,
    \[
        R_\mathcal{O}(\operatorname{ERM}(S^{(m)}))-\inf_T R_\mathcal O(T\circ h_0)
        \le 
        2 \mathfrak R_m(\ell_\mathcal O \circ \+T \circ \mathcal H) 
        + 2b\sqrt{\frac{2\log(4/\delta)}{m}}
    \]
    By identifiability \ref{Identifiability}, we have that with probability of at least $1-\delta$,
    \begin{equation} \label{common-bound}
    R(\operatorname{ERM}(S^{(m)})) \le \frac1\eta\left(2 \mathfrak R(\ell_\mathcal O \circ \+T \circ \mathcal H) + 2b\sqrt{\frac{2\log(4/\delta)}{m}}\right)
    \end{equation}

    By \ref{Complexity} and lemma \ref{vc-Rademacher}, we bound the Rademacher Complexity by
    \begin{equation} \label{complexity-bound}
        \begin{aligned}
        \mathfrak R_m(\ell_\mathcal O \circ \+T \circ \mathcal H)
        & \le  
        \sigma\log\left(\frac{\mathrm{e}b}{\sigma}\right)\sqrt{\frac{2\overline{\Gamma}_m(d)}{m}} + 4\frac{b}{m}\overline{\Gamma}_n(d) \\ 
        & \le  
        b\sqrt{\frac{2\overline{\Gamma}_m(d)}{m}} + 4\frac{b}{m}\overline{\Gamma}_n(d)
        \end{aligned}
    \end{equation}
    Now the result follows by combining \eqref{common-bound} and \eqref{complexity-bound}.
\end{proof}


\subsection{Proof of Proposition \ref{Natarajan2weakVC}}

\begin{proof}
    First, we translate weak-VC major to the language of standard VC-dimension \citep{VapnikCh71}: 
    For a fixed $u\in\mathbb{R}$ and every $h\in\mathcal{H},T\in\+T$, we define an binary classifier: $f_{h,T,u}(x,o)=\mathbb{1}\left\{\ell_\mathcal O(h(x),T,(x,o))>u\right\}$ and denote $\mathcal{F}_u:=\{f_{h,T,u}:h\in\mathcal{H},T\in\+T\}$ as the set of such classifiers. Then $\mathcal{C}_u$ \emph{shatters} a set in $\mathcal{X}\times\mathcal{O}$ if and only if $\mathcal{F}_u$ shatters (in VC theory) the same set, so $\ell_\mathcal O \circ \+T \circ \mathcal H$ is weak VC-major with dimension $d$ if $d=\max_{u\in\mathbb{R}}\operatorname{VC}(\mathcal{F}_u)<\infty$, where $\operatorname{VC}(\cdot)$ is the VC dimension for hypothesis class of binary classifiers.
    
    Let $M$ be the maximum number of distinct ways to classify $d$ points in $\+X$ by $\+H$. Then for $d$ points in $\+X\times\+O$, suppose there are at most $M$ ways to assign multi-class labels $Y$ to each point. By Natarajan’s lemma \citep{Nataraj89b} of multiclass classification, we have
    \begin{equation} \label{Natarajan-count}
        M \le d^{d_\mathcal{H}}c^{2d_\mathcal{H}}
    \end{equation}
    For each way of assignment, it produces a set of $d$ points in $\+X\times\+Y\times\+O$, and for these $d$ points, by Sauer-Shelah lemma, there are at most 
    \[\sum_{i=0}^{d_\+T} \binom{d}{i} \le \left(\frac{\text{e}d}{d_\+T}\right)^{d_\+T}
    \] 
    ways to classify if $\ell_\+O(\widehat{y},T,(x,o))>u$ by $\+T$, so in total we have
    \[
    2^d \le M \sum_{i=0}^{d_\+T} \binom{d}{i} \le M\left(\frac{\text{e}d}{d_\+T}\right)^{d_\+T}
    \]
    where $\text{e}$ is the base of the natural logarithm. Therefore, $M\ge 2^d\left({d_\+T}/{\text{e}d}\right)^{d_\+T}$.
    Then, by \eqref{Natarajan-count} 
    \[
    d^{d_\mathcal{H}}c^{2d_\mathcal{H}} \ge M \ge 2^d\left(\frac{d_\+T}{\text{e}d}\right)^{d_\+T}
    \]
    Taking logarithm in both side, we have
    \[
    d_\mathcal{H}\log d+2d_\mathcal{H}\log c \ge d\log2 + d_\+T(\log d_\+T - \log d -1)
    \]
    Rearrange the inequality,
    \[
    \begin{aligned}
        d\log2 + d_\+T(\log d_\+T -1) & \le (d_\mathcal{H}+d_\+T)\log d+2d_\mathcal{H}\log c \\
        & \le (d_\mathcal{H}+d_\+T)\left(\frac{d}{6(d_\mathcal{H}+d_\+T)}+\log(6(d_\mathcal{H}+d_\+T))-1\right)+2d_\mathcal{H}\log c \\
        & = d/6 + (d_\mathcal{H}+d_\+T)\left(\log(6(d_\mathcal{H}+d_\+T))-1\right)+2d_\mathcal{H}\log c \\
        & \le d/6 + (d_\mathcal{H}+d_\+T)\log(6(d_\mathcal{H}+d_\+T))+2d_\mathcal{H}\log c 
    \end{aligned}
    \]
    where the second step follows from the first-order Taylor series expansion of logarithm function at the point $6(d_\mathcal{H}+d_\+T)$. Therefore,
    $$
    \begin{aligned}
    d & \le \frac{(d_\mathcal{H}+d_\+T)\log(6(d_\mathcal{H}+d_\+T))+2d_\mathcal{H}\log c - d_\+T(\log(d_\+T))}{\log2-1/6}\\ 
    & \le 
    2\left( (d_\mathcal{H}+d_\+T)\log(6(d_\mathcal{H}+d_\+T))+2d_\mathcal{H}\log c \right)
    \end{aligned}
    $$
    where the last step follows from $\log2-1/6<1/2$.
\end{proof}


\subsection{Proof of Corollary \ref{vc-example}}

The first two conclusions of corollary 4.4 are straightforward. We prove the last statement.

\begin{proof}
    Given $2p+3$ points in $\+X \times \+Y \times \+O$, without loss of generality, suppose there are at least $p+2$ points such that $o\ne y$.
    For these points, the value of annotation loss only depends on $S(w^\T x)$.
    For any $u\in\mathbb{R}$, the classifier 
    \[
        f_{h,T,u}(x,o)=\mathbb{1}\left\{\ell_\mathcal O(h(x),T,(x,o))>u\right\}
        =
        \mathbb{1}\left\{\log(S(w^\T x))<-u\right\}
    \]
    is a linear classifier with decision boundary $w^\T x=e^{-u}$. Since the VC dimension of hyperplanes of dimension $p$ is $p+1$, we know these linear classifiers cannot classify $p+2$ points arbitrarily. Therefore, the original $2p+3$ points cannot be classified arbitrarily, and we have $d_\+T\le 2p+2$.
\end{proof}


\subsection{Proof of Theorem \ref{separation}}

\begin{proof} 
    Denote the cross-entropy of two distributions $D_1$ and $D_2$ as $H(D_1, D_2)$ and the entropy of a distribution $D$ as $H(D)$. Let $\ell_\+O$ be the cross-entropy loss, for a fixed $x\in\+X$ we have that
    \[
        \begin{aligned}
        & \E_o[\ell_\mathcal{O}(h(x),T,(x,o))] - \E_o[\ell_\mathcal{O}(h_0(x),T_0,(x,o))] \\ 
        =\; & H((T_0(x))_{h_0(x)},(T(x))_{h(x)}) - H((T_0(x))_{h_0(x)},(T_0(x))_{h_0(x)}) \\ 
        =\; & H((T_0(x))_{h_0(x)},(T(x))_{h(x)}) - H((T_0(x))_{h_0(x)}) \\ 
        =\; & \KL{(T_0(x))_{h_0(x)}}{(T(x))_{h(x)}}
        \end{aligned}
    \]
    If $h(x)\ne h_0(x)$, then by the separation condition we have that 
    \[\KL{(T_0(x))_{h_0(x)}}{(T(x))_{h(x)}}\ge \gamma\]
    Also, if $h(x)= h_0(x)$, we have 
    \[\KL{(T_0(x))_{h_0(x)}}{(T(x))_{h_0(x)}}\ge \KL{(T_0(x))_{h_0(x)}}{(T_0(x))_{h_0(x)}} = 0\] 
    Therefore, for a fixed $h\in \+H$
    \[
    \begin{aligned}
        & R_\mathcal O(h\circ T)-\inf_{T\in\+T}R_\mathcal O(h_0 \circ T) \\ 
        =\; & \E_{x,o}[\ell_\mathcal{O}(h(x),T,(x,o))] - \E_{x,o}[\ell_\mathcal{O}(h_0(x),T_0,(x,o))] \\
        \ge\; & \P(h(x)\ne h_0(x)) \inf_{T,h(x)\ne h_0(x)} (\E_o[\ell_\mathcal{O}(h(x),T,(x,o))]-\E_o[\ell_\mathcal{O}(h_0(x),T,(x,o))]) \\ 
        & + \P(h(x)= h_0(x)) \inf_{T,h(x)= h_0(x)} (\E_o[\ell_\mathcal{O}(h(x),T,(x,o))]-\E_o[\ell_\mathcal{O}(h_0(x),T,(x,o))])\\ 
        \ge\; & \P(h(x)\ne h_0(x)) \inf_{T,h(x)\ne h_0(x)} (\E_o[\ell_\mathcal{O}(h(x),T,(x,o))]-\E_o[\ell_\mathcal{O}(h_0(x),T,(x,o))]) \\
        =\; & \P(h(x)\ne h_0(x)) \inf_{T,h(x)\ne h_0(x)} \KL{(T_0(x))_{h_0(x)}}{(T(x))_{h(x)}} \\ 
        \ge\; &  \P(h(x)\ne h_0(x)) \gamma \ge0
    \end{aligned}
    \]
    This shows the consistency condition \ref{Consistency}. Also, if $\P(h(x)\ne h_0(x))>0$, notice that $\P(h(x)\ne h_0(x))=R(h)$, we have
    \[
        \eta=\inf_{R(h)>0} \dfrac{R_\mathcal O(h\circ T)-\inf_{T\in\+T}R_\mathcal O(h_0 \circ T)}{R(h)} \ge \frac{\gamma R(h)}{R(h)}
        =\gamma>0
    \] 
    This shows the identifiability condition \ref{Identifiability}. 
    
    Moreover, if the condition \eqref{separation-condition} is not satisfied, by definition we have 
    \[
    \begin{aligned}
     \gamma =\; & \inf_{(x,i,j):p(x,y_i)>0,j\ne i,y_j\in\+H(x)} \KL{\mathcal{D}_i(x)}{\mathcal{D}_j(x)} \\ 
    =\; & \inf_{(x,i,j):p(x,y_i)>0,j\ne i,y_j\in\+H(x),D_i\in\+D_i(x),D_j\in\+D_j(x)} \KL{{D}_i}{{D}_j} \\ 
    =\; & 0
    \end{aligned}
    \]
    This condition implies that for any $k\in\mathbb{N^+}$, there exists a 5-tuple 
    \[\left(x^{(k)}, y^{(k)}_i, y^{(k)}_j, D^{(k)}_i(x^{(k)}), D^{(k)}_j(x^{(k)})\right)\in \+X\times \+Y\times\+Y \times \+D_\+O\times\+D_\+O\] 
    such that
    \begin{itemize}
        \item $p(x,y_i^{(k)})>0$
        \item $y^{(k)}_i\ne y^{(k)}_j$ 
        \item There is a $h_-\in\+H$ such that $h_-(x^{(k)})= y^{(k)}_j$
        \item $\KL{D^{(k)}_i(x^{(k)})}{D_j^{(k)}(x^{(k)})} < \frac1k$
    \end{itemize}
    Now, let $D^{(k)}_X$ be the point mass distribution with probability one to be $x^{(k)}$, i.e., $D^{(k)}_X(\{x^{(k)}\})=1$. Then, we have $h_0(x^{(k)})=y^{(k)}_i$ since $h_0$ has zero classification error. Also, let $T_0^{(k)}\in\+T$ be such that its $i^\text{th}$ row is $D^{(k)}_i$, and $T_-^{(k)}\in\+T$ be such that its $j^\text{th}$ row is $D^{(k)}_j$. We have
    $$
    \begin{aligned}
        \eta^{(k)}
        & = \underset{h \in \mathcal H: R(h)>0}{\inf} \dfrac{R_\mathcal O(h\circ T)-\inf_{T\in\+T}R_\mathcal O(h_0 \circ T)}{R(h)} \\
        & = \underset{h \in \mathcal H: R(h)>0}{\inf} \dfrac{R_\mathcal O(h\circ T)-R_\mathcal O(h_0 \circ T_0^{(k)})}{R(h)} \\ 
        & \le \dfrac{R_\mathcal O(h_-^{(k)}\circ T_-^{(k)})-R_\mathcal O(h_0 \circ T_0^{(k)})}{R(h_-^{(k)})}\\
        & \le \KL{D^{(k)}_i(x^{(k)})}{D_j^{(k)}(x^{(k)})}\\
        & \le \frac1k
    \end{aligned}
    $$
    Let $k\rightarrow \infty$ and the desired result follows.
\end{proof}


\subsection{Proof of Proposition \ref{Robustness}}

\begin{proof}
First, for any $(x,y_i)\in\+X\times\+Y$ with $p(x,y_i)>0$ and $D_i\in\+D_i(x)$, $D_j\in\+D_j(x)$, by Pinsker's inequality, we have
\[
\begin{aligned}
    \KL{D_i}{D_j} 
    & \ge 2 \|D_i-D_j\|_{\text{TV}}^2 = \frac12\|D_i-D_j\|_{1}^2 \\ 
    & = \frac12 \left( \sum_{o\in\+O} |D_i(o)-D_j(o)|\right)^2 \\ 
    & \ge \frac12 (|D_i(S_i-S_j)-D_j(S_i-S_j)| +|D_i(S_j-S_i)-D_j(S_j-S_i)|)^2 \\
    & \ge \frac12 (D_i(S_i-S_j)-D_j(S_i-S_j) - D_i(S_j-S_i) + D_j(S_j-S_i))^2 \\
    & = \frac12 (D_i(S_i-S_j)-D_i(S_j-S_i) + D_j(S_j-S_i) -D_j(S_i-S_j))^2 \\
    & = \frac12 (D_i(S_i)-D_i(S_j) + D_j(S_j) -D_j(S_i))^2 \\
    & \ge \frac12(2\gamma_C)^2 = 2\gamma_C^2
\end{aligned} 
\]
where $D_i(\cdot)$ is the probability measure over $\+O$ defined by $D_i$, and $S_i-S_j$ is the set subtraction: $S_i-S_j\eqdef\{o:o\in S_i \land o\notin S_j\}\subset \+O$. Taking infimum on both sides of the inequality gives the first result. Another proof for this result can be found in the proof of Proposition 5.8.

Next, consider the annotation loss $\ell_\mathcal{O}(h(x),T,(x,o))=\1\{o\notin S_{h(x)}\}$ and its ERM. Then we have
\[
\begin{aligned}
    & \E_{x,o} [\ell_\mathcal{O}(h(x),T,(x,o))] - \E_{x,o}[\ell_\mathcal{O}(h_0(x),T,(x,o))] \\ 
    =\; & \P(o\notin S_{h(x)}) - \P(o\notin S_{h_0(x)}) \\ 
    \ge \; & \P(h(x)\ne h_0(x)) \inf_{x:h(x)\ne h_0(x)} \left( \P(o\in S_{h_0(x)}) - \P(o\in S_{h(x)})\right) \\
    \ge\; & \P(h(x)\ne h_0(x))\gamma_C = R(h)\gamma_C
\end{aligned} 
\]
Therefore,
\[
    \eta=\inf_{R(h)>0} \dfrac{R_\mathcal O(h\circ T)-\inf_{T\in\+T}R_\mathcal O(h_0 \circ T)}{R(h)} \ge \frac{\gamma_C R(h)}{R(h)}
    =\gamma_C>0
\] 
as claimed.
\end{proof}

\subsection{Proof of Proposition \ref{Evidence}}

\begin{proof}
    Since $\Phi_{ij}$ is Lipschitz, then for any $a,b\in\mathbb{R}^s$, we have
    \[ 
        |\Phi_{ij}(a)-\Phi_{ij}(b)| \le L_{ij} \|a-b\|_1
    \]
    Hence, given $(x,i,j)$ such that $p(x,y_i)>0,j\ne i$ and $y_j\in\+H(x)$, then for any $D_i\in\+D_i(x)$ and $D_j\in\+D_j(x)$, by Lipschitz property we have
    \[ 
        \|D_i-D_j\|_1 
        \ge \frac{1}{L_{ij}} |\Phi_{ij}(D_i)-\Phi_{ij}(D_j)|
        \ge \frac{\gamma_{ij}}{L_{ij}}
    \]
    Therefore, by Pinsker's inequality, we have
    \[ 
        \KL{D_i}{D_j} \ge \frac12\|D_i-D_j\|_1^2 \ge \frac12 \left(\frac{\gamma_{ij}}{L_{ij}}\right)^2
        \ge \frac12 \min_{i\ne j} \left(\frac{\gamma_{ij}}{L_{ij}}\right)^2
    \]
    Taking infimum on the left hand side of the inequality gives the desired result.

    In particular, if $\Phi$ represents the inner product with a fixed vector $u$, i.e., $\Phi(a)=\langle u,a\rangle$, then $\Phi$ is Lipschitz since for any $a,b\in\mathbb{R}^s$, by the H\"older's inequality, we have
    \[ 
    \begin{aligned}
        |\Phi(a)-\Phi(b)| & = |\langle u, a-b \rangle|
        \le \|u\|_\infty \|a-b\|_1
    \end{aligned}
    \]
    Therefore, we can bound the Lipschitz constant of $\Phi$ by $L\le \|u\|_\infty$.

    To recover the concentration condition, given sets $S_i\subset \+O (1\le i \le c)$, for any $i\ne j$, let 
    \[
        \Phi_{ij}(a)=\left\langle\sum_{k:o_k\in S_i} \hat e_k -\sum_{k:o_k\in S_j} \hat e_k ,a\right\rangle
    \]
    Then $\Phi_{ij}(D_i)=\P_{D_i}(O\in S_i)-\P_{D_i}(O\in S_j)$ and $\Phi_{ij}(D_j)=\P_{D_j}(O\in S_i)-\P_{D_j}(O\in S_j)$. The concentration condition \eqref{concentration-condition} implies that
    \[
        \inf_{p(x,y_i)>0, y_j\in\+H(x), D_i\in \+D_i(x), D_j\in \+D_j(x)}  
        \Phi_{ij}(D_i) - \Phi_{ij}(D_j)
        \ge 2\gamma_C >0
    \]
    Moreover, since 
    $\left\|\sum_{k:o_k\in S_i} \hat e_k -\sum_{k:o_k\in S_j} \hat e_k\right\|_\infty=1$
    , the separation degree can be bounded by $\gamma \ge \frac12 \min_{i\ne j} \left({\gamma_{ij}}\right)^2 = 2\gamma_C^2$.
\end{proof}

\subsection{Proof of Proposition \ref{no-free-separation}}

\begin{proof}

    Given $D_i\in\+D_i(x)$ and $D_j\in\+D_j(x)$, write $D_i=\lambda D_{i1} + (1-\lambda) D_{i2}$
    and $D_j=\lambda D_{j1} + (1-\lambda) D_{j2}$, where 
    $D_{i1}\in\+D_{i1}(x)$,
    $D_{j1}\in\+D_{j1}(x)$,
    $D_{i2}\in\+D_{i2}(x)$,
    $D_{j2}\in\+D_{j2}(x)$.
    The summation $D_i=\lambda D_{i1} + (1-\lambda) D_{i2}$ means that we combine $D_{i1}$ and $D_{i2}$ as distributions over $\+O$ such that $D_i(o)=\lambda\1\{o\in\+O_1\}D_{i1}(o)+(1-\lambda)\1\{o\in\+O_2\}D_{i2}(o)$ for any $o\in\+O$.
    
    The first result basically follows from the convexity of KL-divergence: we have
    \begin{equation} \label{compare-KL}
    \begin{aligned}
        \KL{D_i}{D_j} 
        & = \KL{\lambda D_{i1} + (1-\lambda) D_{i2}}{\lambda D_{j1} + (1-\lambda) D_{j2}}\\ 
        & \le \lambda \KL{D_{i1}}{D_{j1}}
            +(1-\lambda)\KL{D_{i2}}{D_{j2}}
    \end{aligned}
    \end{equation}
    Hence,
    \[
        \begin{aligned}
            \lambda \KL{D_{i1}}{D_{j1}}
            +(1-\lambda)\KL{D_{i2}}{D_{j2}}
            \ge
            \inf_{x:p(x,y_i)>0,y_j\in\+H(x)}\KL{\+D_i}{\+D_j} = \gamma_{i\rightarrow j}
        \end{aligned}
    \]
    Take infimum again on the left hand side of the inequality, we have
    \[
        \lambda \gamma_{i\rightarrow j 1}
        + (1-\lambda) \gamma_{i\rightarrow j2} \ge 
        \gamma_{i\rightarrow j}
    \]
    More over, if $\+O_1\cap \+O_2=\emptyset$, then in \eqref{compare-KL}, we have 
    \[
        \begin{aligned}
            & \KL{\lambda D_{i1} + (1-\lambda) D_{i2}}{\lambda D_{j1} + (1-\lambda) D_{j2}} \\
            =\; & \sum_{o\in\+O} 
                (\lambda D_{i1}(o) + (1-\lambda) D_{i2}(o))
                \log\left(
                    \frac{\lambda D_{i1}(o) + (1-\lambda) D_{i2}(o)}
                    {\lambda D_{j1}(o) + (1-\lambda) D_{j2}(o)}
                \right) \\ 
            =\; & \sum_{o\in\+O_1}
                \lambda D_{i1}(o)
                \log\left(
                    \frac{\lambda D_{i1}(o)}
                    {\lambda D_{j1}(o)}
                \right)
                + \sum_{o\in\+O_2}
                (1-\lambda) D_{i2}(o)
                \log\left(
                    \frac{(1-\lambda) D_{i2}(o)}
                    {(1-\lambda) D_{j2}(o)}
                \right)\\ 
            =\; & \lambda \KL{D_{i1}}{D_{j1}}
            +(1-\lambda)\KL{D_{i2}}{D_{j2}}
        \end{aligned}
    \]
    Hence taking infimum on both sides gives
    $\lambda \gamma_{i\rightarrow j 1}
    + (1-\lambda) \gamma_{i\rightarrow j2} = 
    \gamma_{i\rightarrow j}$.

    The above discussion shows that if $\gamma_{i\rightarrow j}>0$, then one of $\gamma_{i\rightarrow j 1}$ and $\gamma_{i\rightarrow j2}$ must be positive.

\subsection{Remark \ref{defining-intersection}}
        
    Finally, we show by a simple example that if $\+O_1\cap \+O_2\ne\emptyset$, then even if both $\lambda \gamma_{i\rightarrow j 1}$ and $\gamma_{i\rightarrow j2}$ are positive, we can still have $\lambda \gamma_{i\rightarrow j}=0$. 
    
    Consider a binary classification ($\mathcal{Y}=\{\pm1\}$) with two noisy annotations (crowdsourcing with two annotators) $O_1$ and $O_2$.
    Suppose the transitions of the two annotations are known to the learner and are given by constant matrices
    \[
        T_1(x)\equiv
        \begin{bmatrix}
            0.6 & 0.4 \\ 
            0.4 & 0.6
        \end{bmatrix}
        \;\text{and}\;\;
        T_2(x)\equiv
        \begin{bmatrix}
            0.4 & 0.6 \\ 
            0.6 & 0.4
        \end{bmatrix}
    \]
    Then, individually, both the annotations can ensure separation.
    However, suppose $\lambda=1/2$, then in this case, if the annotations are mixed (i.e., the learner do not distinguish the annotations of different annotators, and hence $\+O=\+O_1\cup\+O_2=\+Y$), then for any $x,y$,
    \[
        \P(O=y|x,y)=\lambda\P(O_1=y|x,y) + (1-\lambda)\P(O_2=y|x,y)=1/2\]
    Here we used the condition that $\1\{O=O_k\}$ is independent with $X$. Now, it is not possible to learn $Y$ from the observation of $O$ since $O$ is simply a random noise that is independent of $Y$.
\end{proof}

\bibliographystyle{plainnat}
\bibliography{ccg, new, cited}

\begin{thebibliography}{37}
\providecommand{\natexlab}[1]{#1}
\providecommand{\url}[1]{\texttt{#1}}
\expandafter\ifx\csname urlstyle\endcsname\relax
  \providecommand{\doi}[1]{doi: #1}\else
  \providecommand{\doi}{doi: \begingroup \urlstyle{rm}\Url}\fi

\bibitem[Angluin and Laird(1988)]{AngluinLa88}
D.~Angluin and P.~Laird.
\newblock Learning from noisy examples.
\newblock \emph{Machine Learning}, 1988.

\bibitem[Awasthi et~al.(2015)Awasthi, Balcan, Haghtalab, and Urner]{ABHU15}
Pranjal Awasthi, Maria-Florina Balcan, Nika Haghtalab, and Ruth Urner.
\newblock {Efficient Learning of Linear Separators under Bounded Noise}.
\newblock In Peter Grünwald, Elad Hazan, and Satyen Kale, editors,
  \emph{Proceedings of The 28th Conference on Learning Theory}, volume~40 of
  \emph{Proceedings of Machine Learning Research}, pages 167--190, Paris,
  France, 03--06 Jul 2015. PMLR.
\newblock URL \url{http://proceedings.mlr.press/v40/Awasthi15b.html}.

\bibitem[Baraud(2016)]{Baraud16}
Yannick Baraud.
\newblock {Bounding the expectation of the supremum of an empirical process
  over a (weak) VC-major class}.
\newblock \emph{Electron. J. Statist.}, 10\penalty0 (2):\penalty0 1709--1728,
  2016.
\newblock \doi{10.1214/15-EJS1055}.
\newblock URL \url{https://doi.org/10.1214/15-EJS1055}.

\bibitem[Bartlett and Mendelson(2003)]{BartlettMe03}
Peter~L. Bartlett and Shahar Mendelson.
\newblock {Rademacher and Gaussian Complexities: Risk Bounds and Structural
  Results}.
\newblock \emph{Journal of Machine Learning Research}, 3\penalty0
  (null):\penalty0 463–482, 2003.

\bibitem[Ben-David and Borbely(2007)]{Ben-DavidBo07}
S.~Ben-David and Reba~Schuller Borbely.
\newblock {A notion of task relatedness yielding provable multiple-task
  learning guarantees}.
\newblock \emph{Machine Learning}, 73:\penalty0 273--287, 2007.

\bibitem[Ben-David and Borbely(2003)]{Ben-DavidBo03}
Shai Ben-David and Reba~Schuller Borbely.
\newblock {Exploiting Task Relatedness for Mulitple Task Learning}.
\newblock In \emph{Proc. of the ACM Conference on Computational Learning Theory
  (COLT)}, 2003.

\bibitem[Ben-David and Shalev-Shwartz(2014)]{Ben-DavidSh14}
Shai Ben-David and Shai Shalev-Shwartz.
\newblock \emph{{Understanding Machine Learning : From Theory to Algorithms}}.
\newblock 2014.
\newblock ISBN 9781107057135.

\bibitem[Berant et~al.(2013)Berant, Chou, Frostig, and Liang]{BCFL13}
Jonathan Berant, Andrew Chou, Roy Frostig, and Percy Liang.
\newblock {Semantic Parsing on Freebase from Question-Answer Pairs}.
\newblock In \emph{Proc. of the Conference on Empirical Methods in Natural
  Language Processing (EMNLP)}, 2013.

\bibitem[Bootkrajang and Kab{\'a}n(2012)]{BootkrajangKa12}
Jakramate Bootkrajang and Ata Kab{\'a}n.
\newblock {Label-Noise Robust Logistic Regression and Its Applications}.
\newblock In Peter~A. Flach, Tijl De~Bie, and Nello Cristianini, editors,
  \emph{Machine Learning and Knowledge Discovery in Databases}, pages 143--158,
  Berlin, Heidelberg, 2012. Springer Berlin Heidelberg.
\newblock ISBN 978-3-642-33460-3.

\bibitem[Chang et~al.(2010{\natexlab{a}})Chang, Goldwasser, Roth, and
  Srikumar]{CGRS10}
Ming-Wei Chang, Dan Goldwasser, Dan Roth, and Vivek Srikumar.
\newblock {Discriminative Learning over Constrained Latent Representations}.
\newblock In \emph{Proc. of the Annual Conference of the North American Chapter
  of the Association for Computational Linguistics (NAACL)}, 6
  2010{\natexlab{a}}.
\newblock URL \url{http://cogcomp.org/papers/CGRS10.pdf}.

\bibitem[Chang et~al.(2010{\natexlab{b}})Chang, Srikumar, Goldwasser, and
  Roth]{CSGR10}
Ming-Wei Chang, Vivek Srikumar, Dan Goldwasser, and Dan Roth.
\newblock {Structured Output Learning with Indirect Supervision}.
\newblock In \emph{Proc. of the International Conference on Machine Learning
  (ICML)}, 2010{\natexlab{b}}.
\newblock URL \url{http://cogcomp.org/papers/CSGR10.pdf}.

\bibitem[Cheng et~al.(2017)Cheng, Liu, Ramamohanarao, and Tao]{CLRT17}
Jiacheng Cheng, Tongliang Liu, Kotagiri Ramamohanarao, and Dacheng Tao.
\newblock {Learning with Bounded Instance- and Label-dependent Label Noise}.
\newblock 09 2017.

\bibitem[Cid-Sueiro(2012)]{Cid-Sueiro12}
Jes\'{u}s Cid-Sueiro.
\newblock {Proper Losses for Learning from Partial Labels}.
\newblock In \emph{Proceedings of the 25th International Conference on Neural
  Information Processing Systems - Volume 1}, NIPS’12, page 1565–1573, Red
  Hook, NY, USA, 2012. Curran Associates Inc.

\bibitem[Cid-Sueiro et~al.(2014)Cid-Sueiro, Garc{\'i}a-Garc{\'i}a, and
  Santos-Rodr{\'i}guez]{Cid-SueiroGaSa14}
Jes{\'u}s Cid-Sueiro, Dar{\'i}o Garc{\'i}a-Garc{\'i}a, and Ra{\'u}l
  Santos-Rodr{\'i}guez.
\newblock {Consistency of Losses for Learning from Weak Labels}.
\newblock In Toon Calders, Floriana Esposito, Eyke H{\"u}llermeier, and Rosa
  Meo, editors, \emph{Machine Learning and Knowledge Discovery in Databases},
  pages 197--210, Berlin, Heidelberg, 2014. Springer Berlin Heidelberg.
\newblock ISBN 978-3-662-44848-9.

\bibitem[Clarke et~al.(2010)Clarke, Goldwasser, Chang, and Roth]{CGCR10}
James Clarke, Dan Goldwasser, Ming-Wei Chang, and Dan Roth.
\newblock {Driving Semantic Parsing from the World's Response}.
\newblock In \emph{Proc. of the Conference on Computational Natural Language
  Learning (CoNLL)}, 7 2010.
\newblock URL \url{http://cogcomp.org/papers/CGCR10.pdf}.

\bibitem[Cour et~al.(2011)Cour, Sapp, and Taskar]{CourSaTa11}
Timoth{\'e}e Cour, Benjamin Sapp, and Ben Taskar.
\newblock Learning from partial labels.
\newblock \emph{Journal of Machine Learning Research}, 12:\penalty0 1501--1536,
  2011.

\bibitem[Diakonikolas et~al.(2019)Diakonikolas, Gouleakis, and
  Tzamos]{DiakonikolasGoTz19}
Ilias Diakonikolas, Themis Gouleakis, and Christos Tzamos.
\newblock {Distribution-Independent PAC Learning of Halfspaces with Massart
  Noise}.
\newblock In \emph{NeurIPS}, 2019.

\bibitem[Geva et~al.(2019)Geva, Goldberg, and Berant]{GevaGoBe19}
Mor Geva, Yoav Goldberg, and Jonathan Berant.
\newblock {Are We Modeling the Task or the Annotator? An Investigation of
  Annotator Bias in Natural Language Understanding Datasets}.
\newblock In \emph{EMNLP/IJCNLP}, 2019.

\bibitem[Ghosh et~al.(2015)Ghosh, Manwani, and Sastry]{GhoshMaSa15}
Aritra Ghosh, Naresh Manwani, and P.S. Sastry.
\newblock {Making risk minimization tolerant to label noise}.
\newblock \emph{Neurocomputing}, 160:\penalty0 93 -- 107, 2015.
\newblock ISSN 0925-2312.
\newblock \doi{https://doi.org/10.1016/j.neucom.2014.09.081}.
\newblock URL
  \url{http://www.sciencedirect.com/science/article/pii/S0925231215001204}.

\bibitem[Ishida et~al.(2017)Ishida, Niu, Hu, and Sugiyama]{INHS17}
T.~Ishida, Gang Niu, Weihua Hu, and Masashi Sugiyama.
\newblock {Learning from Complementary Labels}.
\newblock In \emph{NIPS}, 2017.

\bibitem[Kearns(1998)]{Kearns98}
M.~Kearns.
\newblock Efficient noise-tolerant learning from statistical queries.
\newblock \emph{Journal of the ACM}, 1998.

\bibitem[Liu and Dietterich(2014)]{LiuDi14}
Li-Ping Liu and Thomas~G. Dietterich.
\newblock {Learnability of the Superset Label Learning Problem}.
\newblock In \emph{Proceedings of the 31st International Conference on
  International Conference on Machine Learning - Volume 32}, ICML'14, pages
  II--1629--II--1637. JMLR.org, 2014.
\newblock URL \url{http://dl.acm.org/citation.cfm?id=3044805.3045074}.

\bibitem[Massart and Élodie Nédélec(2006)]{MassartNe06}
Pascal Massart and Élodie Nédélec.
\newblock {Risk bounds for statistical learning}.
\newblock \emph{Ann. Statist.}, 34\penalty0 (5):\penalty0 2326--2366, 10 2006.
\newblock \doi{10.1214/009053606000000786}.
\newblock URL \url{https://doi.org/10.1214/009053606000000786}.

\bibitem[Menon et~al.(2018)Menon, van Rooyen, and Natarajan]{MenonRoNa18}
Aditya~Krishna Menon, Brendan van Rooyen, and Nagarajan Natarajan.
\newblock {Learning from binary labels with instance-dependent noise}.
\newblock \emph{Machine Learning}, 107\penalty0 (8):\penalty0 1561--1595, 2018.
\newblock \doi{10.1007/s10994-018-5715-3}.
\newblock URL \url{https://doi.org/10.1007/s10994-018-5715-3}.

\bibitem[Natarajan(1989)]{Nataraj89b}
B.~K. Natarajan.
\newblock On learning sets and functions.
\newblock \emph{Machine Learning}, 1989.

\bibitem[Natarajan et~al.(2013)Natarajan, Dhillon, Ravikumar, and
  Tewari]{NDRT13}
Nagarajan Natarajan, Inderjit~S. Dhillon, Pradeep Ravikumar, and Ambuj Tewari.
\newblock {Learning with Noisy Labels}.
\newblock In \emph{Proceedings of the 26th International Conference on Neural
  Information Processing Systems - Volume 1}, NIPS'13, pages 1196--1204, USA,
  2013. Curran Associates Inc.
\newblock URL \url{http://dl.acm.org/citation.cfm?id=2999611.2999745}.

\bibitem[Ning et~al.(2019)Ning, He, Fan, and Roth]{NHFR19}
Qiang Ning, Hangfeng He, Chuchu Fan, and Dan Roth.
\newblock {Partial or Complete, That's The Question}.
\newblock In \emph{Proc. of the Annual Conference of the North American Chapter
  of the Association for Computational Linguistics (NAACL)}, 2019.
\newblock URL \url{https://arxiv.org/pdf/1906.04937.pdf}.

\bibitem[Pinsker()]{pinsker}
M.S. Pinsker.
\newblock \emph{{Information and information stability of random variables and
  processes}}.
\newblock Holden-Day series in time series analysis. Holden-Day.

\bibitem[Raghunathan et~al.(2016)Raghunathan, Frostig, Duchi, and
  Liang]{RFDL16}
Aditi Raghunathan, Roy Frostig, John Duchi, and Percy Liang.
\newblock {Estimation from Indirect Supervision with Linear Moments}.
\newblock In Maria~Florina Balcan and Kilian~Q. Weinberger, editors,
  \emph{Proceedings of The 33rd International Conference on Machine Learning},
  volume~48 of \emph{Proceedings of Machine Learning Research}, pages
  2568--2577, New York, New York, USA, 20--22 Jun 2016. PMLR.
\newblock URL \url{http://proceedings.mlr.press/v48/raghunathan16.html}.

\bibitem[Ralaivola et~al.(2006)Ralaivola, Denis, and Magnan]{RalaivolaDeMa06}
L.~Ralaivola, F.~Denis, and C.~Magnan.
\newblock {CN = CPCN}.
\newblock In \emph{ICML '06}, 2006.

\bibitem[Rolnick et~al.(2017)Rolnick, Veit, Belongie, and Shavit]{RVBS17}
David Rolnick, Andreas Veit, Serge Belongie, and Nir Shavit.
\newblock {Deep Learning is Robust to Massive Label Noise}.
\newblock 05 2017.

\bibitem[Scott et~al.(2013)Scott, Blanchard, H, and y]{SBHy13}
Clayton Scott, Gilles Blanchard, Gregory H, and y.
\newblock {Classification with Asymmetric Label Noise: Consistency and Maximal
  Denoising}.
\newblock In Shai Shalev-Shwartz and Ingo Steinwart, editors, \emph{Proceedings
  of the 26th Annual Conference on Learning Theory}, volume~30 of
  \emph{Proceedings of Machine Learning Research}, pages 489--511, Princeton,
  NJ, USA, 12--14 Jun 2013. PMLR.
\newblock URL \url{http://proceedings.mlr.press/v30/Scott13.html}.

\bibitem[Steinhardt and Liang(2015)]{SteinhardtLi15}
Jacob Steinhardt and Percy Liang.
\newblock {Learning with Relaxed Supervision}.
\newblock In \emph{NIPS}, 2015.

\bibitem[Sukhbaatar et~al.(2014)Sukhbaatar, Bruna, Paluri, Bourdev, and
  Fergus]{SBPBF14}
Sainbayar Sukhbaatar, Joan Bruna, Manohar Paluri, Lubomir~D. Bourdev, and Rob
  Fergus.
\newblock {Training Convolutional Networks with Noisy Labels}.
\newblock \emph{arXiv: Computer Vision and Pattern Recognition}, 2014.

\bibitem[Tsybakov(2008)]{Tsybakov08}
A.B. Tsybakov.
\newblock \emph{{Introduction to Nonparametric Estimation}}.
\newblock Springer Series in Statistics. Springer New York, 2008.
\newblock ISBN 9780387790527.

\bibitem[van Rooyen and Williamson(2018)]{RooyenWi18}
Brendan van Rooyen and Robert~C. Williamson.
\newblock {A Theory of Learning with Corrupted Labels}.
\newblock \emph{Journal of Machine Learning Research}, 18\penalty0
  (228):\penalty0 1--50, 2018.
\newblock URL \url{http://jmlr.org/papers/v18/16-315.html}.

\bibitem[Vapnik and Chervonenkis(1971)]{VapnikCh71}
V.~N. Vapnik and A.~Ya. Chervonenkis.
\newblock On the uniform convergence of relative frequencies of events to their
  probabilities.
\newblock \emph{Theory of Probability and its applications}, 1971.

\end{thebibliography}

\end{document}